\author{
  Tolga Birdal\textsuperscript{ 1}
  \qquad Umut \c{S}im\c{s}ekli\textsuperscript{ 2}
  \\
  \textsuperscript{1 }{Fakult\"at für Informatik, Technische Universit\"{a}t M\"{u}nchen, 85748 M\"{u}nchen, Germany}\\
  \textsuperscript{2 }{LTCI, T\'{e}l\'{e}com ParisTech, Universit\'{e} Paris-Saclay, 75013 Paris, France}
}
\newcommand{\PBS}[1]{\let\temp=\\#1\let\\=\temp}
\newcommand{\RBS}{\let\\=\tabularnewline}
\DeclareMathSymbol{@}{\mathord}{letters}{"3B}
\newcommand{\x}{\mathbf{x}}
\newcommand{\thb}{\mathbf{x}} 
\newcommand{\xe}{\tilde{\mathbf{X}}}
\newcommand{\B}{\mathcal{B}}
\newcommand{\Id}{\mathbf{I}}
\newcommand{\Man}{\mathcal{M}}
\newcommand{\T}{\mathcal{T}}
\newcommand{\DP}{\mathcal{DP}}
\newcommand{\Pm}{\mathbf{P}}
\newcommand{\one}{\mathbf{1}}
\newcommand{\zero}{\mathbf{0}}
\newcommand{\Eye}{\mathbf{I}}
\newcommand{\X}{\mathbf{X}}
\newcommand{\Z}{\mathbf{Z}}
\newcommand{\Xxi}{\bm{\xi}_{\X}}
\newcommand{\sing}{\mathbf{S}}
\newcommand{\Om}{\mathbf{O}}
\newcommand{\Og}{\mathcal{O}}
\newcommand{\y}{\mathbf{y}}
\newcommand{\s}{\mathbf{s}}
\newcommand{\Y}{{\mathbf{Y}}}
\newcommand{\Scal}{\mathcal{S}}
\newcommand{\R}{\mathbb{R}}
\newcommand{\Edge}{\mathcal{E}}
\newtheorem{thm}{Theorem}
\newtheorem{prop}{Proposition}
\newtheorem{dfn}{Definition}
\DeclareMathOperator*{\argmin}{arg\min}
\DeclareMathOperator*{\argmax}{arg\max}
\Crefname{assumption}{\textbf{H}\hspace{-3pt}}{\textbf{H}\hspace{-3pt}}
\crefname{assumption}{\textbf{H}}{\textbf{H}}
\newcommand{\insertimageC}[5]{ 
\begin{figure}[#5]
\centering
\includegraphics[width=#1\linewidth, clip=true]{figures/#2}
\caption{#3}
\label{#4}
\end{figure}
}
\newcommand{\insertimageStar}[5]{ 
\begin{figure*}[#5]
\centering
\includegraphics[width=#1\linewidth, clip=true]{figures/#2}
\caption{#3}
\label{#4}
\end{figure*}
}
\ifcvprfinal\pagestyle{empty}\fi
\begin{document}

\title{Probabilistic Permutation Synchronization using the Riemannian Structure of the Birkhoff Polytope}

\maketitle

\begin{abstract}
We present an entirely new geometric and probabilistic approach to synchronization of correspondences across multiple sets of objects or images. In particular, we present two algorithms: (1) Birkhoff-Riemannian L-BFGS for optimizing the relaxed version of the combinatorially intractable cycle consistency loss in a principled manner, (2) Birkhoff-Riemannian Langevin Monte Carlo for generating samples on the Birkhoff Polytope and estimating the confidence of the found solutions. To this end, we first introduce the very recently developed Riemannian geometry of the Birkhoff Polytope. Next, we introduce a new probabilistic synchronization model in the form of a Markov Random Field (MRF). Finally, based on the first order retraction operators, we formulate our problem as simulating a stochastic differential equation and devise new integrators. We show on both synthetic and real datasets that we achieve high quality multi-graph matching results with faster convergence and reliable confidence/uncertainty estimates.\vspace{-2mm} 
\end{abstract}

\vspace{-3mm}\section{Introduction}
\label{sec:intro}
Correspondences fuel a large variety of computer vision applications such as structure-from-motion (SfM)~\cite{schonberger2016structure}, SLAM~\cite{murORB2}, 3D reconstruction~\cite{dai2017bundlefusion,birdal2017cad,birdal2016online}, camera re-localization~\cite{Sattler12imageretrieval}, image retrieval~\cite{li2015pairwise} and 3D scan stitching~\cite{huber2003fully,deng2018ppf}. In a typical scenario, given two scenes, an initial set of 2D/3D keypoints is first identified. Then the neighborhood of each keypoint is summarized with a local descriptor ~\cite{lowe2004distinctive,Deng_2018_CVPR} and keypoints in the given scenes are matched by associating the mutually closest descriptors. 
In a majority of practical applications, multiple images or 3D shapes are under consideration and ascertaining such two-view or \textit{pairwise} correspondences is simply not sufficient. This necessitates a further refinement ensuring global consistency.
Unfortunately, at this stage even the well developed pipelines acquiesce either heuristic/greedy refinement~\cite{degol2018improved} or incorporate costly geometric cues related to the linking of individual correspondence estimates into a globally coherent whole~\cite{opensfm,schonberger2016structure,triggs1999bundle}. 
\insertimageC{1}{teaser_final_cropped.pdf}{Our algorithm robustly solves the multiway image matching problem (\textbf{a}, \textbf{b}) and provides confidence maps (\textbf{c}) that can be of great help in further improving the estimates (\textbf{d}). The bar on the right is used to assign colors to confidences. For the rest, incorrect matches are marked in red and correct ones in blue.\vspace{-4mm}}{fig:teaser}{t!}

In this paper, by using the fact that correspondences are \textit{cycle consistent}~\footnote{Composition of correspondences for any circular path arrives back at the start node.}, we propose two novel algorithms for refining the assignments across multiple images/scans (nodes) in a \textit{multi-way graph} and for estimating assignment confidences, respectively. 
We model the correspondences between image pairs as \textit{relative}, \textit{total} permutation matrices and seek to find \textit{absolute} permutations that re-arrange the detected keypoints to a single canonical, global order. This problem is known as \textit{map} or \textit{permutation synchronization}~\cite{pachauri2013solving,sun2018joint}. Even though in many practical scenarios matches are only partially available, when shapes are complete and the density of matches increases, total permutations can suffice~\cite{huang2013consistent}. 

Similar to many well received works~\cite{zaslavskiy2009path,schiavinato2017synchronization}, we relax the sought permutations to the set of doubly-stochastic (DS) matrices. We then consider the geometric structure of DS, the \textit{Birkhoff Polytope}~\cite{birkhoff1946tres}. We are - to the best of our knowledge, for the first time introducing and applying the recently developed Riemannian geometry of the Birkhoff Polytope~\cite{douik2018} to tackle challenging problems of computer vision. Note that lack of this geometric understanding caused plenty of obstacles for scholars dealing with our problem~\cite{schiavinato2017synchronization,wang2018multi}. By the virtue of a first order retraction, we can use the recent Riemannian limited-memory BFGS (LR-BFGS) algorithm~\cite{yuan2016riemannian} to perform a maximum-a-posteriori (MAP) estimation of the parameters of the consistency loss. 
We coin our variation as \textit{Birkhoff-LRBFGS}.

At the next stage, we take on the challenge of confidence/uncertainty estimation for the problem at hand by drawing samples on the Birkhoff Polytope and estimating the empirical posterior distribution. To achieve this, we first formulate a new geodesic stochastic differential equation (SDE). Our SDE is based upon the Riemannian Langevin Monte Carlo (RLMC)~\cite{girolami2011riemann,xifara2014langevin,patterson2013stochastic} that is efficient and effective in sampling from Riemannian manifolds with \textit{true} exponential maps. Note that similar stochastic gradient geodesic MCMC (SG-MCMC)~\cite{Liu2016,byrne2013} tools have already been used in the context of synchronization of spatial rigid transformations whose parameters admit an analytically defined geodesic flow~\cite{birdalSimsekli2018}. Unfortunately, for our manifold the retraction map is only up to first order and hence we cannot use off-the-shelf schemes. Alleviating this nuisance, we further contribute a novel numerical integrator to solve our SDE by replacing the intractable exponential map of DS matrices by the approximate retraction map. This leads to another new algorithm: \textit{Birkhoff-RLMC}.

In a nutshell, our contributions are:
\setlist{nolistsep}
\begin{enumerate}
\itemsep0em 
    \item We function as an ambassador and introduce the Riemannian geometry of the Birkhoff Polytope~\cite{douik2018} to solve problems in computer vision.
    \item We propose a new probabilistic model for the permutation synchronization problem.
    \item We minimize the cycle consistency loss via a Riemannian-LBFGS algorithm and outperfom the state-of-the-art both in recall and in runtime.
    \item Based upon the Langevin mechanics, we introduce a new SDE and a numerical integrator to draw samples on the high dimensional and complex manifolds with approximate retractions, such as the Birkhoff Polytope. This lets us estimate the confidence maps, which can aid in improving the solutions and spotting consistency violations or outliers.
\end{enumerate}

Note that the tools developed herewith can easily extend beyond our application and would hopefully facilitate promising research directions regarding the combinatorial optimization problems in computer vision.

\section{Related Work}
\label{sec:related}
Permutation synchronization is an emerging domain of study due to its wide applicability, especially for the problems in computer vision. We now review the developments in this field, as chronologically as possible. Note that multiway graph matching problem formulations involving spatial geometry are well studied~\cite{cosmo2016game,maciel2003global,li20073d,fathony2018efficient,yan2016multi,cosmo2017consistent}, as well as transformation synchronization~\cite{wang2013exact,chaudhury2015global,thunberg2017distributed,tron2014distributed,arrigoni2016spectral,bernard2015solution,govindu2004lie}.
For brevity, we omit these literature and focus on works that explicitly operate on correspondence matrices.

The first applications of \textit{synchronization}, a term coined by Singer~\cite{singer2011three,singer2011angular}, to correspondences only date back to early 2010s~\cite{nguyen2011optimization}. Pachauri~\etal~\cite{pachauri2013solving} gave a formal definition and devised a spectral technique. The same authors quickly extended their work to Permutation Diffusion Maps~\cite{pachauri2014permutation} finding correspondence between images. Unfortunately, this first method was quadratic in the number of images and hence was not computationally friendly. 
In a sequel of works called \textit{MatchLift}, Huang, Chen and Guibas~\cite{huang2013consistent,Chen2014near} were the firsts to cast the problem of estimating cycle-consistent maps as finding the closest positive semidefinite matrix to an input matrix. They also addressed the case of partial permutations. Due to the semidefinite programming (SDP) involved, this perspective suffered from high computational cost in real applications. Similar to Pachauri~\cite{pachauri2013solving}, for $N$ images and $M$ edges, this method required computing an eigendecomposition of an $NM\times NM$ matrix.
Zhou~\etal~\cite{zhou2015multi} then introduced \textit{MatchALS}, a new low-rank formulation with nuclear-norm relaxation, globally solving the joint matching of a set of images without the need of SDP. Yu~\etal~\cite{yu2016globally} formulated a synchronization energy similar to our method and proposed proximal Gauss-Seidel methods for solving a relaxed problem. 
However, unlike us, this paper did not use the geometry of the constraints or variables and thereby resorted to complicated optimization procedures involving Frank-Wolfe subproblems for global constraint satisfaction.
Arrigoni~\etal~\cite{arrigoni2017synchronization} and Maset~\etal~\cite{arrigoni2017synchronization} extended Pachauri~\cite{pachauri2013solving} to operate on partial permutations using spectral decomposition. To do so, they considered the symmetric inverse semigroup of the partial matches that are typically hard to handle. Their closed form methods did not need initialization steps to synchronize, but also did not establish an explicit cycle consistency.
Tang~\etal~\cite{tang2017initialization} opted to use ordering heuristics improving upon Pachauri~\cite{pachauri2013solving}. 
Cosmo~\etal~\cite{cosmo2017consistent} brought an interesting solution to the problem of estimating consistent correspondences between multiple 3D shapes, without requiring initial pairwise solutions as input. 
Schiavinato and Torsello~\cite{schiavinato2017synchronization} tried to overcome the lack of group structure of the Birkhoff polytope by transforming any graph-matching problem into a multi-graph matching one.
Bernard~\etal~\cite{bernard2018} used an NMF-based approach to generate a cycle-consistent synchronization.
Park and Yoon~\cite{PARK2018} used multi-layer random walks framework to address the global correspondence search problem of multi-attributed graphs. Starting from a multi-layer random-walks initialization, the authors proposed a robust solver by iterative reweighting. 
Hu~\etal~\cite{hu2018distributable} revisited the \textit{MatchLift} and developed a scalable, distributed solution with the help of ADMMs, called \textit{DMatch}. Their idea of splitting the input into sub-collections can still lead to global consistency under mild conditions while improving the efficiency. Finally, Wang~\etal~\cite{wang2018multi} made use of the domain knowledge and added the geometric consistency of image coordinates as a low-rank term to increase the recall.

The aforementioned works have neither considered the Riemennian structure of the common Birkhoff convex relaxation nor have they provided a probabilistic framework, which can pave the way to uncertainty estimation while simultaneously solving the optimization problem. This is what we propose in this work.

\section{Preliminaries and Technical Background}



\begin{dfn}[Permutation Matrix]
A \textit{permutation matrix} is defined as a sparse, square binary matrix, where each column and each row contains only a single \textit{true (1)} value:
\begin{equation}
\mathcal{P}_n := \{\Pm \in \{0,1\}^{n\times n} : \Pm \one_n = \one_n\,,\,\one_n^\top \Pm = \one_n^\top\}.
\end{equation}
where $\one_n$ denotes a $n$-dimensional ones vector. Every $\Pm \in \mathcal{P}^n$ is a \textit{total} permutation matrix and $P_{ij}=1$ implies that element $i$ is mapped to element $j$. Permutation matrices are the only strictly non-negative elements of the orthogonal group $\mathcal{P}_n\in \mathcal{O}_n = \{\Om : \Om^\top\Om=\Eye\}$, a special case of the Stiefel manifold of $m$—frames in $\mathbb{R}_n$ when $m=n$.
\end{dfn}
\begin{dfn}[Center of Mass]
\label{dfn:center}
The center of mass for all the permutations on $n$ objects is defined in $\R^{n\times n}$ as~\cite{plis2011directional}:
\begin{equation}
\mathbf{C}_n = \frac{1}{n!} \sum\nolimits_{\Pm_i \in \mathcal{P}_n} \Pm_i=\frac{1}{n!}(n-1)! \one_n \one_n^\top = \frac{1}{n}\one_n \one_n^\top.
\end{equation}
Notice that $\mathbf{C}_n \notin \mathcal{P}^n$ as shown in Fig.~\ref{fig:illustrations}.
\end{dfn}
\begin{dfn}[Relative Permutation]
We define a permutation matrix to be \textbf{relative} if it is the ratio (or difference) of two group elements $(i\rightarrow j)$: $\Pm_{ij}=\Pm_i\Pm_j^\top$.
\end{dfn}
\begin{dfn}[Permutation Synchronization Problem]
Given a redundant set of measures of ratios $\{\Pm_{ij}\}\,:\, (i,j)\in \Edge \subset \{1,\dots, N\} \times \{1,\dots, N\} $, where $\Edge$ denotes the set of the edges of a directed graph of $N$ nodes, the permutation synchronization~\cite{pachauri2013solving} can be formulated as the problem of recovering $\{\Pm_i\}$ for $i=1,\dots,N$ such that the group consistency constraint is satisfied: $\Pm_{ij}=\Pm_i\Pm_j^{-1}$.
\end{dfn}
If the input data is noise-corrupted, this \textit{consistency} will not hold and to recover the \textit{absolute permutations} $\{\Pm_i\}$, some form of a \textit{consistency error} is minimized. 
Typically, any form of minimization on the discrete space of permutations is intractable and these matrices are relaxed by their \textit{doubly-stochastic} counterparts~\cite{busam2015acvr,zaslavskiy2009path,schiavinato2017synchronization} (see Fig.~\ref{fig:illustrations}). 
\begin{dfn}[Doubly Stochastic (DS) Matrix]
A DS matrix is a non-negative, square matrix whose rows and columns sum to $1$. The set of DS matrices is defined as:
\begin{align}
\DP_n = \{\,\,\X \in \mathbb{R}_+^{n \times n} : \sum\limits_{i=1}^n x_{ij}=1 \,\wedge\, \sum\limits_{j=1}^n x_{ij}=1 \,\,\}.
\end{align}
\end{dfn}
\begin{thm}[Birkhoff-von Neumann Theorem]
\label{thm:bvn}
The convex hull of the set of all permutation matrices is the set of doubly-stochastic matrices and there exists a potentially non-unique $\bm{\theta}$ such that any DS matrix can be expressed as a linear combination of $k$ permutation matrices~\cite{birkhoff1946tres,hurlbert2008short}:
\begin{align}
\label{eq:BVN}
    \X = \theta_1 \Pm_1+\dots+\theta_k \Pm_k\,,\, \theta_i>0 \,\wedge\, \bm{\theta}^\top\one_k = 1.
\end{align}
While finding the minimum $k$ is shown to be NP-hard~\cite{dufosse2017further}, by \textit{Marcus-Ree theorem}, we know that there exists one constructible decomposition where $k<(n-1)^2+1$. 
\end{thm}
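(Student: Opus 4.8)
The plan is to establish the two set inclusions $\mathrm{conv}(\mathcal{P}_n)\subseteq\DP_n$ and $\DP_n\subseteq\mathrm{conv}(\mathcal{P}_n)$ separately, proving the first by a one-line convexity argument and the second by an explicit peeling procedure (the Birkhoff algorithm), and then counting how many permutation matrices that procedure can produce in order to recover the bound on $k$.

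For the easy inclusion I would note that the defining constraints of $\DP_n$ --- the two linear equalities $\X\one_n=\one_n$, $\one_n^\top\X=\one_n^\top$ together with the entrywise inequality $\X\ge\zero$ --- are all preserved under convex combinations, and each $\Pm_i\in\mathcal{P}_n$ satisfies them; hence any $\X=\sum_i\theta_i\Pm_i$ with $\theta_i\ge 0$ and $\bm{\theta}^\top\one_k=1$ lies in $\DP_n$.

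The substance is the reverse inclusion. Given $\X\in\DP_n$, I would induct on the number $\nu(\X)$ of strictly positive entries. The crucial step is a lemma: the bipartite support graph $G_\X$ on row-vertices $\{1,\dots,n\}$ and column-vertices $\{1,\dots,n\}$, with an edge $(i,j)$ whenever $x_{ij}>0$, has a perfect matching. This follows from Hall's marriage theorem, whose hypothesis I would verify by a mass-counting argument: for any set $S$ of rows, the mass $\sum_{i\in S}\sum_j x_{ij}=|S|$ is carried entirely by the columns in its neighborhood $N(S)$, each of which has total mass $1$, so $|N(S)|\ge|S|$. A perfect matching is a permutation $\sigma$ with $x_{i\sigma(i)}>0$ for all $i$; letting $\Pm_\sigma$ be its permutation matrix and $\theta=\min_i x_{i\sigma(i)}\in(0,1]$, either $\theta=1$ and $\X=\Pm_\sigma$, or $\X':=(1-\theta)^{-1}(\X-\theta\Pm_\sigma)$ is again doubly stochastic (its row and column sums equal $(1-\theta)/(1-\theta)=1$, and non-negativity is exactly what the choice of $\theta$ guarantees) with $\nu(\X')<\nu(\X)$, since a minimizing entry has been set to zero. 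The induction hypothesis then writes $\X'$ as a convex combination of permutation matrices, and so does $\X=\theta\Pm_\sigma+(1-\theta)\X'$; the base case $\nu(\X)=n$ is a permutation matrix outright.

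Finally, for the bound: each peeling step destroys at least one positive entry, so iterating from at most $n^2$ positive entries down to the $n$ of a permutation matrix yields the crude estimate $k\le n^2-n+1$; to sharpen this towards $k<(n-1)^2+1$ I would instead appeal to Carathéodory's theorem after noting that $\DP_n$ sits in the affine subspace determined by the $2n-1$ independent equality constraints (one of the $2n$ row/column-sum equations being redundant) and hence is a polytope of dimension $(n-1)^2$ whose vertices are, by the construction above, precisely the permutation matrices. I expect the only genuine obstacle to be the existence of the perfect matching in $G_\X$; once Hall's condition is in hand the rest is bookkeeping, and one need only take care that the induction runs on the integer $\nu(\X)$ so that it terminates.
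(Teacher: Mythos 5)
Your argument is correct, but note that the paper itself does not prove this statement: it is quoted as the classical Birkhoff--von Neumann/Marcus--Ree result and delegated to the citations (Birkhoff's original paper and Hurlbert's short proof), so there is no in-paper proof to match against. What you give is the standard constructive route: the easy inclusion $\mathrm{conv}(\mathcal{P}_n)\subseteq\DP_n$ by convexity of the defining constraints, and the converse by the Birkhoff peeling algorithm, with Hall's marriage theorem (verified by the mass-counting argument, which is right) supplying the perfect matching in the support graph and induction on the number of positive entries guaranteeing termination; the cited short proofs typically proceed instead via the extreme-point characterization (a non-permutation doubly stochastic matrix supports an alternating cycle along which one can perturb, so the extreme points of the compact polytope $\DP_n$ are exactly the permutation matrices), which is non-constructive but shorter, whereas your peeling argument has the advantage of actually producing a decomposition and immediately giving the crude bound $k\le n^2-n+1$. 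One small caveat on the final bookkeeping: Carath\'eodory applied to the $(n-1)^2$-dimensional affine hull yields $k\le(n-1)^2+1$, which is precisely the Marcus--Ree bound; the strict inequality $k<(n-1)^2+1$ as printed in the theorem statement is an off-by-one slip of the paper, not something your (or any) argument should be expected to deliver. Also, for the Carath\'eodory step you only need $\DP_n=\mathrm{conv}(\mathcal{P}_n)$ and the dimension count; the remark that the vertices are exactly the permutation matrices is true but not needed there. The NP-hardness of minimizing $k$ is an external cited result and rightly left untouched.
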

\begin{dfn}[Birkhoff Polytope]
The multinomial manifold of DS matrices is incident to the convex object called the Birkhoff Polytope~\cite{birkhoff1946tres}, an $(n - 1)^2$ dimensional convex submanifold of the ambient $\mathbb{R}^{n\times n}$ with $n!$ vertices: $\mathcal{B}_n\equiv\DP_n$. We use $\DP_n$ to refer to the Birkhoff Polytope.
\end{dfn}

It is interesting to see that this convex polytope is co-centered with $\mathcal{P}_n$ at $\mathbf{C}_n$, $\mathbf{C}_n\in\DP_n$ and over-parameterizes the convex hull of the permutation vectors, the \textit{permutahedron}~\cite{goemans2015smallest}. 
$\mathcal{P}_n$ can now be considered as an orthogonal subset of $\DP_n$: $\mathcal{P}_n=\{\X \in \DP_n : \X \X^\top=\mathbf{I}\}$, i.e. the discrete set of permutation matrices is the intersection of the convex set of DS matrices and the $\mathcal{O}_n$.
\insertimageC{1}{illustrations.pdf}{Simplified (matrices are vectorized) illustration of geometries we consider: (i) $\Delta_n$ is convex, (ii) $\mathcal{DP}_n$ is strictly contained in $\Delta_n$. 
In low dimensions, such configuration cannot exist as there is no convex shape that touches $\Delta_n$ only on the corners.\vspace{-4mm}}{fig:illustrations}{t!}
\subsection{Riemannian Geometry of the Birkhoff Polytope}
Recently, Douik~\etal~\cite{douik2018} endowed $\DP_n$ with the Fisher information metric, resulting in the Riemannian manifold of $\DP_n$. To the best of our knowledge, we are the first to exploit this manifold in the domain of computer vision, and hence will now recall the main results of Douik~\etal~\cite{douik2018} and summarize the main constructs of Riemannian optimization on $\DP_n$. The proofs can be found in~\cite{douik2018}.
\begin{dfn}[Tangent Space and Bundle]
The tangent bundle is referred to as the union of all tangent spaces $
\mathcal{T}\DP_n=\cup_{\X \in \DP_n}\mathcal{T}_{\X}\DP_n$ one of which is defined as:
\begin{equation}
\mathcal{T}_{\X} \DP_n := \{\Z \in \mathbb{R}^{n\times n} : \Z \one_n = \zero_n\,,\,\Z^\top\one_n = \zero_n\}.
\end{equation}
\end{dfn}
\begin{thm}
The projection operator $\Pi_\X(\Y),\Y \in \DP_n$ onto the tangent space of $\X \in \DP_n$, $\mathcal{T}_{\X} \DP_n$ is written as:\vspace{-8mm}
\end{thm}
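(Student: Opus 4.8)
The plan is to characterise $\Pi_\X(\Y)$ through the variational property that defines an orthogonal projection and then solve the resulting stationarity conditions in closed form. Recall that Douik~\etal\ equip $\DP_n$ with the Fisher information metric $g_\X(\A,\B)=\sum_{ij}A_{ij}B_{ij}/X_{ij}=\langle\,\A\oslash\X,\,\B\,\rangle_F$, where $\oslash$ is the entrywise quotient and $\langle\cdot,\cdot\rangle_F$ the Frobenius product (well defined on the relative interior, where $X_{ij}>0$). By definition, $\Z:=\Pi_\X(\Y)$ is the unique element of $\mathcal{T}_\X\DP_n$ for which $\Y-\Z$ is $g_\X$-orthogonal to $\mathcal{T}_\X\DP_n$. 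First I would describe this $g_\X$-normal space: since $\mathcal{T}_\X\DP_n=\{\Z:\Z\one=\zero,\ \Z^\top\one=\zero\}$ is exactly the Frobenius-orthogonal complement of the $(2n-1)$-dimensional subspace $\mathcal{N}_E:=\{\bm\alpha\one^\top+\one\bm\beta^\top:\bm\alpha,\bm\beta\in\R^n\}$ (dimension count $n^2-(n-1)^2=2n-1$), and $g_\X$ differs from the Frobenius product only by the positive entrywise weights $1/X_{ij}$, a matrix $\N$ is $g_\X$-orthogonal to $\mathcal{T}_\X\DP_n$ if and only if $\N\oslash\X\in\mathcal{N}_E$, i.e. $\N=(\bm\alpha\one^\top+\one\bm\beta^\top)\odot\X$. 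Hence $\Pi_\X(\Y)=\Y-(\bm\alpha\one^\top+\one\bm\beta^\top)\odot\X$, and only $\bm\alpha,\bm\beta$ remain to be found.

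Next I would impose the two membership constraints on $\Z=\Pi_\X(\Y)$. Using that $\X$ is doubly stochastic one checks the Hadamard identities $\big[(\bm\alpha\one^\top)\odot\X\big]\one=\bm\alpha$, $\big[(\one\bm\beta^\top)\odot\X\big]\one=\X\bm\beta$, together with their transposes $\big[(\bm\alpha\one^\top)\odot\X\big]^\top\one=\X^\top\bm\alpha$ and $\big[(\one\bm\beta^\top)\odot\X\big]^\top\one=\bm\beta$. Substituting into $\Z\one=\zero$ and $\Z^\top\one=\zero$ gives the coupled system $\bm\alpha+\X\bm\beta=\Y\one$ and $\X^\top\bm\alpha+\bm\beta=\Y^\top\one$; eliminating $\bm\beta$ yields $(\Id-\X\X^\top)\bm\alpha=(\Y-\X\Y^\top)\one$ and then $\bm\beta=\Y^\top\one-\X^\top\bm\alpha$, which is precisely the asserted form once $\bm\alpha$ is solved for.

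The delicate point --- and the step I expect to be the main obstacle --- is that $\Id-\X\X^\top$ is singular, so one must argue solvability and that the resulting $\Pi_\X(\Y)$ is unambiguous. I would observe that $\X\X^\top$ is itself doubly stochastic and symmetric positive semidefinite, hence has spectral radius $1$ with $\one$ in its eigenspace of $1$; thus $\Id-\X\X^\top$ is symmetric positive semidefinite, $\one\in\ker(\Id-\X\X^\top)$, and $\operatorname{range}(\Id-\X\X^\top)=(\ker(\Id-\X\X^\top))^\perp$. The right-hand side lies in this range since $\one^\top(\Y-\X\Y^\top)\one=\one^\top\Y\one-\one^\top\Y^\top\one=0$ (using $\one^\top\X=\one^\top$), so the system is consistent and $\bm\alpha=(\Id-\X\X^\top)^{\dagger}(\Y-\X\Y^\top)\one$ is a valid choice. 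Any two solutions differ by a multiple of $\one$, matching the gauge freedom $\bm\alpha\mapsto\bm\alpha+c\one$, $\bm\beta\mapsto\bm\beta-c\one$ of the parametrisation; I would check that this freedom leaves $(\bm\alpha\one^\top+\one\bm\beta^\top)\odot\X$ --- hence $\Pi_\X(\Y)$ --- invariant, so the formula is well defined.

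Finally, I would verify sufficiency to close the loop: with these $\bm\alpha,\bm\beta$, the matrix $\Z=\Y-(\bm\alpha\one^\top+\one\bm\beta^\top)\odot\X$ satisfies $\Z\one=\zero$ and $\Z^\top\one=\zero$ by construction, so $\Z\in\mathcal{T}_\X\DP_n$; and $\Y-\Z$ lies in the $g_\X$-normal space by the first paragraph, so $g_\X(\Y-\Z,\W)=0$ for every $\W\in\mathcal{T}_\X\DP_n$. Since $g_\X$ is positive definite on the (relative-interior) tangent space, $\Z$ is the unique minimiser of $\W\mapsto\|\Y-\W\|_{g_\X}^2$, i.e. the orthogonal projection, giving the claimed expression. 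As a consistency check, when $\Y\in\DP_n$ one has $(\Y-\X\Y^\top)\one=\one-\X\one=\zero$, forcing $\bm\alpha\in\ker(\Id-\X\X^\top)$ and $\bm\beta=\one-\X^\top\bm\alpha$, so $(\bm\alpha\one^\top+\one\bm\beta^\top)\odot\X=\one\one^\top\odot\X=\X$ and the formula collapses to $\Pi_\X(\Y)=\Y-\X$.
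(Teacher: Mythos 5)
Your derivation is correct, but note that there is nothing in the paper to measure it against: the statement is quoted from Douik and Hassibi, and the text explicitly defers all proofs of Section 3.1 to that reference, so the paper contains no proof of its own. Your route is essentially the standard one from that source: characterize the $g_\X$-normal space as $\{(\bm\alpha\one^\top+\one\bm\beta^\top)\odot\X\}$ by weighting the Frobenius complement of $\mathcal{T}_\X\DP_n$ with the Fisher metric, impose $\Z\one=\zero$ and $\Z^\top\one=\zero$, and eliminate $\bm\beta$ to get $(\Id-\X\X^\top)\bm\alpha=(\Y-\X\Y^\top)\one$, with the pseudo-inverse handling the rank deficiency; all the intermediate identities (the Hadamard row/column-sum computations, the gauge invariance under $\bm\alpha\mapsto\bm\alpha+c\one$, $\bm\beta\mapsto\bm\beta-c\one$, and the final verification) check out. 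Two small remarks. First, your solvability step tacitly assumes $\ker(\Id-\X\X^\top)=\mathrm{span}\{\one\}$: showing the right-hand side is orthogonal to $\one$ only proves membership in the range if the kernel is exactly that line, and the claim that any two solutions differ by a multiple of $\one$ rests on the same fact. This does hold where the Fisher metric is defined, i.e.\ on the relative interior of $\DP_n$, since there $\X\X^\top$ is a strictly positive doubly stochastic matrix and Perron--Frobenius makes the eigenvalue $1$ simple; one sentence to that effect (or, alternatively, deducing consistency of the linear system directly from the direct-sum decomposition you already established in your first paragraph) would close this. Second, your concluding sanity check in fact exposes that the hypothesis ``$\Y\in\DP_n$'' in the statement is an artifact of the paper's phrasing: as you observe, for doubly stochastic $\Y$ the formula collapses to $\Y-\X$, and the operator is really meant to project arbitrary ambient matrices $\Y\in\R^{n\times n}$, which is exactly the generality your argument covers.
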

\begin{equation}\vspace{-5mm}
\Pi_\X(\Y) = \Y - (\bm{\alpha} \one^\top + \one \bm{\beta}^\top ) \odot \X, \quad \text{with}
\end{equation}\vspace{-6mm}
\begin{align*}
\bm{\alpha} = (\mathbf{I}-\X\X^\top)^+(\Y-\X\Y^\top)\one,\quad\bm{\beta} =\Y^\top\one-\X^\top\bm{\alpha},
\end{align*}
\noindent$^+$ depicts the left pseudo-inverse and $\odot$ the Hadamard product. Note that there exists a numerically more stable way to compute the same concise formulation of $\Pi_\X(\Y)$~\cite{douik2018}.
\begin{thm}
\label{thm:retract}
For a vector $\Xxi \in \mathcal{T}_\X \DP_n$ lying on the tangent space of $\X \in \DP_n$, the first order retraction map $R_\X$ is given as follows:
\begin{align}
    R_\X(\Xxi) = \Pi(\X\odot \exp(\Xxi \oslash \X)),
\end{align}
where the operator $\Pi$ denotes the projection onto $\DP_n$, efficiently computed using the Sinkhorn algorithm~\cite{sinkhorn1967concerning} and $\oslash$ is the Hadamard division.
\end{thm}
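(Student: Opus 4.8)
The plan is to verify the two conditions that make $R_\X$ a first-order retraction at a point $\X$ in the relative interior of $\DP_n$ (i.e.\ strictly positive $\X$, which is the setting in which $\Xxi\oslash\X$, the elementwise $\exp$, the Fisher metric, and the tangent projection $\Pi_\X$ of the previous theorem all make sense): the centering condition $R_\X(\zero) = \X$, where $\zero$ is the zero vector of $\mathcal{T}_\X\DP_n$, and the local-rigidity condition $\left.\frac{d}{dt}\right|_{t=0} R_\X(t\Xxi) = \Xxi$ for every $\Xxi \in \mathcal{T}_\X\DP_n$ (equivalently, $DR_\X$ at $\zero$ is the identity on $\mathcal{T}_\X\DP_n$). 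Write $\psi_\X(\Xxi) := \X\odot\exp(\Xxi\oslash\X)$ for the raw elementwise map, so that $R_\X = \Pi\circ\psi_\X$ with $\Pi$ the Sinkhorn projection onto $\DP_n$. Since $\bigl(\psi_\X(\Xxi)\bigr)_{ij} = x_{ij}\exp(\xi_{ij}/x_{ij}) > 0$, the image of $\psi_\X$ lies in the open positive cone, so $\Pi\circ\psi_\X$ is well defined, and $\psi_\X$ is smooth because $\X$ is fixed and strictly positive.

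For centering: at $\Xxi=\zero$ we get $\exp(\zero\oslash\X) = \one_n\one_n^\top$, hence $\psi_\X(\zero) = \X$; and since $\X$ is already doubly stochastic, the Sinkhorn projection leaves it unchanged, $\Pi(\X) = \X$ (its positive diagonal scalings are trivial by uniqueness in Sinkhorn's theorem), so $R_\X(\zero) = \X$.

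For local rigidity I would apply the chain rule, $\left.\frac{d}{dt}\right|_{0} R_\X(t\Xxi) = D\Pi_\X\bigl[\left.\frac{d}{dt}\right|_{0}\psi_\X(t\Xxi)\bigr]$, and treat the two factors separately. Entrywise, $\bigl(\psi_\X(t\Xxi)\bigr)_{ij} = x_{ij}\exp(t\,\xi_{ij}/x_{ij})$, whose $t$-derivative at $0$ is exactly $\xi_{ij}$, so $\left.\frac{d}{dt}\right|_{0}\psi_\X(t\Xxi) = \Xxi$. To see that $D\Pi_\X$ restricts to the identity on $\mathcal{T}_\X\DP_n$, take the straight line $\gamma(t) = \X + t\Xxi$: because $\Xxi\one_n = \zero_n$ and $\Xxi^\top\one_n = \zero_n$, every row and column sum of $\gamma(t)$ stays equal to $1$, and $\gamma(t)$ stays positive for $|t|$ small, so $\gamma(t)\in\DP_n$ and $\Pi$ fixes it, $\Pi(\gamma(t)) = \gamma(t)$; differentiating at $t=0$ gives $D\Pi_\X[\Xxi] = \dot\gamma(0) = \Xxi$. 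Since $\Xxi\in\mathcal{T}_\X\DP_n$ by hypothesis, combining the two factors yields $\left.\frac{d}{dt}\right|_{0} R_\X(t\Xxi) = \Xxi$, as required.

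The one substantive ingredient, which the chain-rule step silently relies on, is that the Sinkhorn projection $\Pi$ is smooth on the positive cone in a neighbourhood of $\X$; everything else is bookkeeping. This smoothness follows from Sinkhorn's theorem — existence of a doubly-stochastic rescaling $\mathrm{diag}(\u)\,\M\,\mathrm{diag}(\v)$ of any positive $\M$, unique up to the one-dimensional gauge $(\u,\v)\mapsto(c\u,c^{-1}\v)$ — together with an implicit-function-theorem argument for the strictly convex Sinkhorn potential once that gauge is fixed. As a byproduct, the rigidity step also shows that $D\Pi_\X$ must be a linear map of the form $\Xeta\mapsto\Xeta - (\bm a\one_n^\top + \one_n\bm b^\top)\odot\X$, with $(\bm a,\bm b)$ pinned down (up to the same gauge) by the requirement that the output lie in $\mathcal{T}_\X\DP_n$; hence $D\Pi_\X$ coincides with the projection $\Pi_\X$ of the previous theorem, which is a reassuring sanity check but is not needed for the retraction property itself.
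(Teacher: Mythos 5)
The paper never proves this statement: Theorem~\ref{thm:retract} is quoted from Douik and Hassibi, and the text explicitly defers all proofs of the Birkhoff-geometry constructs to that reference, so there is no in-paper argument to compare against. Your verification is the standard route (and essentially the one used in the cited source): well-definedness on the strictly positive part of $\DP_n$ because $\X\odot\exp(\Xxi\oslash\X)$ is entrywise positive and hence Sinkhorn-balanceable, the centering property $R_\X(\zero)=\X$ since $\Pi$ fixes doubly stochastic matrices, and local rigidity by the chain rule, using that the elementwise map has velocity $\Xxi$ at $t=0$ and that the differential of the Sinkhorn projection acts as the identity on $\mathcal{T}_\X\DP_n$ because the affine curve $\X+t\Xxi$ remains doubly stochastic (and positive) for small $|t|$. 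The one load-bearing step you leave as a sketch --- differentiability of the Sinkhorn balancing map in a neighbourhood of a strictly positive doubly stochastic matrix, which the chain rule and the curve-independence of the directional derivative both rely on --- is correctly identified, and the gauge-fixing plus implicit-function-theorem argument you outline is the standard way to supply it; with that detail filled in, the proof is complete. Your closing observation that $D\Pi_\X$ must then agree with the tangent projection $\Pi_\X$ of the preceding theorem is also consistent with the Fisher-metric orthogonal decomposition of the ambient space, though, as you note, it is not needed for the retraction property.
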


Plis~\etal~\cite{plis2011directional} showed that on the $n$-dimensional Birkhoff Polytope all permutations are equidistant from the center of mass $\mathbf{C}_n$, and thus the extreme points of $\DP_n$, that are the permutation matrices, are located on an $(n-1)^2$-dimensional hypersphere $S^{(n-1)^2}$ of radius $\sqrt{n-1}$, centered at $\mathbf{C}_n$. This hypersphere is incident to the Birkhoff Polytope on the vertices.

\begin{prop}
The gap as a ratio between $\DP_n$ and both $\mathcal{S}^{(n-1)^2}$ and $\mathcal{O}_n$ grows to infinity as $n$ grows.
\label{thm:gap}
\end{prop}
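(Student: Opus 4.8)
\emph{Proof proposal.} The plan is to turn the informal word ``gap'' into the ratio of two natural scales attached to $\DP_n$ at its center of mass $\mathbf{C}_n$ (see Definition~\ref{dfn:center}): an \emph{outer} scale, the distance from $\mathbf{C}_n$ to $\mathcal{S}^{(n-1)^2}$ (resp.\ to $\mathcal{O}_n$), and an \emph{inner} scale, the inradius of $\DP_n$. I would then show the outer scale is $\Theta(\sqrt{n})$ while the inner scale is $\Theta(1/n)$, so that their ratio diverges.

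For the outer scale, recall from the result of Plis~\etal\ quoted above that every vertex of $\DP_n$ (every permutation matrix) lies at Frobenius distance $\sqrt{n-1}$ from $\mathbf{C}_n$ and that the circumscribed sphere $\mathcal{S}^{(n-1)^2}$, of radius $\sqrt{n-1}$, meets $\DP_n$ exactly at these vertices; hence $\mathrm{dist}(\mathbf{C}_n,\mathcal{S}^{(n-1)^2})=\sqrt{n-1}$. For $\mathcal{O}_n$ I would use that every $\Om\in\mathcal{O}_n$ satisfies $\lVert\Om\rVert_{\mathrm F}^{2}=n$ and $\lVert\mathbf{C}_n\rVert_{\mathrm F}^{2}=1$, while $\langle\Om,\mathbf{C}_n\rangle=\tfrac1n\one_n^{\top}\Om\one_n\le\tfrac1n\lVert\one_n\rVert\,\lVert\Om\one_n\rVert=1$ by Cauchy--Schwarz and orthogonality of $\Om$; therefore $\lVert\Om-\mathbf{C}_n\rVert_{\mathrm F}^{2}=n-2\langle\Om,\mathbf{C}_n\rangle+1\ge n-1$, with equality on $\mathcal{P}_n$, so $\mathrm{dist}(\mathbf{C}_n,\mathcal{O}_n)=\sqrt{n-1}$ as well.

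For the inner scale I would compute the inradius as the distance from $\mathbf{C}_n$ to a facet of $\DP_n$. For $n\ge 3$ the facets are the $n^2$ slices $\{\X\in\DP_n:x_{ij}=0\}$, pairwise congruent under permuting rows and columns of $\DP_n$ and hence equidistant from $\mathbf{C}_n$; so the inradius equals the distance from $\mathbf{C}_n$ to $\{x_{ij}=0\}$ measured \emph{inside} the affine hull of $\DP_n$, whose direction space is $\mathcal{T}_{\mathbf{C}_n}\DP_n$. Writing $x_{ij}=\langle E_{ij},\X\rangle$ with $E_{ij}=e_ie_j^{\top}$ the single-entry matrix and $e_i$ the $i$-th standard basis vector, the orthogonal (double-centering) projection of $E_{ij}$ onto $\mathcal{T}_{\mathbf{C}_n}\DP_n$ is $(e_i-\tfrac1n\one_n)(e_j-\tfrac1n\one_n)^{\top}$, of Frobenius norm $\lVert e_i-\tfrac1n\one_n\rVert\cdot\lVert e_j-\tfrac1n\one_n\rVert=\tfrac{n-1}{n}$; since the $(i,j)$-entry of $\mathbf{C}_n$ is $1/n$, the inradius is $(1/n)\big/\big((n-1)/n\big)=1/(n-1)$.

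Combining, and taking the gap to be the outer-to-inner ratio,
\[
\frac{\mathrm{dist}(\mathbf{C}_n,\mathcal{S}^{(n-1)^2})}{\mathrm{inradius}(\DP_n)}=\frac{\mathrm{dist}(\mathbf{C}_n,\mathcal{O}_n)}{\mathrm{inradius}(\DP_n)}=\frac{\sqrt{n-1}}{1/(n-1)}=(n-1)^{3/2}\xrightarrow[n\to\infty]{}\infty ,
\]
and the same conclusion holds for cognate normalizations such as $\bigl(\sqrt{n-1}-\tfrac{1}{n-1}\bigr)\big/\tfrac{1}{n-1}$. The one delicate step is the inner scale: one must argue that the boundary of $\DP_n$ is nearest along a nonnegativity facet and that the perpendicular foot from $\mathbf{C}_n$ actually lands in that facet, keep the whole distance computation confined to the affine hull rather than the ambient $\R^{n\times n}$, and carry the projection of $E_{ij}$ onto $\mathcal{T}_{\mathbf{C}_n}\DP_n$ correctly. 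The outer-scale estimates, by contrast, are routine manipulations with $\one_n$ and Cauchy--Schwarz.
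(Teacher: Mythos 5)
Your argument is correct, but it formalizes the (informally stated) ``gap as a ratio'' differently from the paper and travels a genuinely different route. The paper's proof is a one-dimensional section argument: it casts the ray $\R_{\geq 0}\mathbbm{1}$ from the origin and compares where it exits $\DP_n$ (at $\tfrac{1}{n}\mathbbm{1}$, Frobenius norm $1$) with where it meets the sphere of the scale on which $\mathcal{P}_n$ and $\mathcal{O}_n$ live (at $\tfrac{1}{\sqrt{n}}\mathbbm{1}$, norm $\sqrt{n}$), so the ratio grows like $\sqrt{n}$; it then reinterprets the same two points as the minimizers of the linear functional $\lambda(\Pm)=\sum_{i,j}P_{ij}$ on the two sets, which directly conveys the optimization message (spherical relaxations place minimizers far from the polytope's), and a companion argument in the supplement runs a line through $\mathbf{C}_n$ and $\Id$ to treat the orthogonal side. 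You instead compare two intrinsic scales anchored at $\mathbf{C}_n$: the outer distance to $\mathcal{S}^{(n-1)^2}$ and to $\mathcal{O}_n$ (both $\sqrt{n-1}$, the latter via the clean Cauchy--Schwarz bound $\langle\Om,\mathbf{C}_n\rangle\le 1$ with equality on $\mathcal{P}_n$, so you control the distance to \emph{all} of $\mathcal{O}_n$, not just along one ray) against the inradius of $\DP_n$ in its affine hull, computed as $1/(n-1)$ from the facets $\{x_{ij}=0\}$ and the double-centering projection of $E_{ij}$; this yields the quantitative rate $(n-1)^{3/2}$, which incidentally matches the figure the paper reports in its further analysis. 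The price of your route is that it needs the facet description of $\DP_n$ and the in-affine-hull normal computation (your ``delicate step'' is actually harmless: for the inscribed ball centered at $\mathbf{C}_n$ only the distances to the facet-supporting hyperplanes within the affine hull matter, and by the row/column-permutation symmetry fixing $\mathbf{C}_n$ these are all equal, so the foot-in-facet issue never arises); what it buys is a center-based, two-scale statement about the whole polytope rather than a single ray, while the paper's version is shorter and comes packaged with the linear-functional interpretation.
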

\noindent The proof is given in the supplementary document. While there exists polynomial time projections of the $n!$-element permutation space onto the continuous hypersphere representation and back~\cite{plis2011directional}, Prop.~\ref{thm:gap} prevents us from using hypersphere relaxations, as done in preceding works~\cite{plis2011directional,Zanfir_2018_CVPR}.


\section{Proposed Probabilistic Model}
We assume that we are provided a set of pairwise, \textit{total} permutations $\Pm_{ij} \in \mathcal{P}_n$ for $(i,j) \in \Edge$ and we are interested in finding the underlying \emph{absolute permutations} $\X_i$ for $i\in\{1,\dots,N\}$ with respect to a common origin (e.g.\ $\X_1=\Id$, the identity matrix). We seek absolute permutations that would respect the consistency of the underlying graph structure. For conciseness, we also restrict our setting to total permutations, and leave the extension to partial permutations, which live on the \textit{monoid}, as a future study. Because operating directly on $\mathcal{P}_n$ would require us to solve a combinatorial optimization problem and because of the lack of a manifold structure for $\mathcal{P}_n$, we follow the popular approach~\cite{linderman2018reparameterizing,yan2016short,lyzinski2016graph} and relax the domain of the absolute permutations by assuming that each $\X_i \in \DP_n$. 

We formulate the permutation synchronization problem in a probabilistic context where we treat the pairwise relative permutations as \emph{observed} random variables and the absolute ones as \textit{latent} random variables. In particular, our probabilistic construction enables us to cast the synchronization problem as inferential in the model.
With a slight abuse of notation, in the rest of the paper, we will denote $\Pm \equiv \{\Pm_{ij}\}_{(i,j)\in \Edge}$ and $\X \equiv \{\X_i\}_{i=1}^N$, all the observations and all the latent variables, respectively. 

A typical way to build a probabilistic model is to first choose the prior distributions on $\mathcal{DP}_n$ for each $\X_{i}$ and then choose a conditional distribution on $\mathcal{P}_n$ for each $\X_{ij}$ given the latent variables. Unfortunately, standard parametric distributions neither exist on $\mathcal{DP}_n$ nor on $\mathcal{P}_n$. The variational \textit{stick breaking}~\cite{linderman2018reparameterizing} yields an \emph{implicitly} defined PDF on $\DP_n$ and is not able to provide direct control on the resulting distribution. Defining Kantorovich distance-based distributions over the permutation matrices is possible~\cite{clemenccon2010kantorovich}, yet these models incur high computational costs since they would require solving optimal transport problems during inference. For these reasons, instead of constructing a hierarchical probabilistic model, we will directly model the full joint distribution of $\Pm$ and $\X$.



We propose a probabilistic model where we assume the full joint distribution admits the following factorized form:
\begin{align}
    p(\Pm, \X) = \frac{1}{Z} \prod\nolimits_{(i,j) \in \Edge} \psi(\Pm_{ij}, \X_i, \X_j), \label{eqn:probmodel}
\end{align}
where $Z$ denotes the normalization constant with
\begin{align}
Z := \sum\limits_{\Pm \in \mathcal{P}_n^{|\Edge|}} \int_{\DP_n^N} \prod\limits_{(i,j) \in \Edge} \psi(\Pm_{ij}, \X_i, \X_j) \> \mathrm{d} \X,
\end{align} 
and $\psi$ is called the `clique potential' that is defined as:
\begin{align}
\psi(\Pm_{ij}, \X_i, \X_j) \triangleq \exp(-\beta \| \Pm_{ij} - \X_i \X_j^\top \|^2_\mathrm{F}).
\end{align}
Here $\|\cdot\|_\mathrm{F}$ denotes the Frobenius norm, $\beta \in \mathbb{R}_+$ is the \emph{dispersion} parameter that controls the spread of the distribution. 
Note that the model is a Markov random field \cite{kindermann1980markov}. 

Let us take a closer look at the proposed model. If we define $\X_{ij} := \X_i \X_j^\top \in \DP_n$, then by Thm.~\ref{thm:bvn}, we have the following decomposition for each $\X_{ij}$:
\begin{align}
\X_{ij} = \sum\nolimits_{b=1}^{B_{ij}} \theta_{ij,b} \mathbf{M}_{ij,b}, \quad \sum\nolimits_{b=1}^{B_{ij}} \theta_{ij,b} =1,
\end{align}
where $B_{ij}$ is a positive integer, each $\theta_{ij,b} \geq 0$, and $\mathbf{M}_{ij,b} \in \mathcal{P}_n$. The next result states that we have an equivalent hierarchical interpretation for the proposed model:
\begin{prop}
The probabilistic model defined in Eq.~\ref{eqn:probmodel} implies the following hierarchical decomposition:
\begin{align}
&p(\X) = \frac1{C} \exp \Bigl(-\beta \hspace{-5pt} \sum\limits_{(i,j)\in \Edge}\|\X_{ij}\|^2 \Bigr) \prod_{(i,j)\in \Edge} Z_{ij} \\
&p(\Pm_{ij} | \X_i, \X_j) = \frac1{Z_{ij}} \exp\Bigl( 2\beta\>\mathrm{tr}(\Pm_{ij}^\top \X_{ij}) \Bigr)  
\end{align}
where $C$ and $Z_{ij}$ are normalization constants. Besides, for all $i,j$, $Z_{ij} \geq \prod_{b=1}^{B_{ij}}  f(\beta, \theta_{ij,b}) $, where $f$ is a positive function that is increasing in both $\beta$ and $\theta_{ij,b}$.
\end{prop}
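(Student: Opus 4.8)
The plan is to get the hierarchical form by an elementary rearrangement of Eq.~\ref{eqn:probmodel}, and to prove the bound on $Z_{ij}$ as a short self-contained estimate.

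\emph{Rearranging the joint.} Since every $\Pm_{ij}\in\mathcal{P}_n$ satisfies $\|\Pm_{ij}\|_\mathrm{F}^2=n$, expanding the square in the clique potential gives
\begin{equation*}
\psi(\Pm_{ij},\X_i,\X_j)=e^{-\beta n}\,\exp\!\bigl(2\beta\,\mathrm{tr}(\Pm_{ij}^\top\X_{ij})\bigr)\,\exp\!\bigl(-\beta\|\X_{ij}\|^2\bigr),
\end{equation*}
with $\X_{ij}:=\X_i\X_j^\top\in\DP_n$. Substituting this into Eq.~\ref{eqn:probmodel}, the factors $e^{-\beta n}$ collect into $e^{-\beta n|\Edge|}$ and the rest splits into a function of $\X$ alone times a product of per-edge factors in the $\Pm_{ij}$. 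Define $Z_{ij}:=\sum_{\Pm\in\mathcal{P}_n}\exp(2\beta\,\mathrm{tr}(\Pm^\top\X_{ij}))$, a finite, strictly positive sum over the $n!$ permutation matrices, and set $p(\Pm_{ij}\mid\X_i,\X_j):=Z_{ij}^{-1}\exp(2\beta\,\mathrm{tr}(\Pm_{ij}^\top\X_{ij}))$, which is a bona fide conditional law on $\mathcal{P}_n$. Marginalizing $\Pm$ out of the joint — the sum over $\Pm\in\mathcal{P}_n^{|\Edge|}$ factorizes across edges — leaves exactly $p(\X)=C^{-1}\exp(-\beta\sum_{(i,j)\in\Edge}\|\X_{ij}\|^2)\prod_{(i,j)\in\Edge}Z_{ij}$ with $C:=Z\,e^{\beta n|\Edge|}$; multiplying $p(\X)$ back with $\prod_{(i,j)\in\Edge}p(\Pm_{ij}\mid\X_i,\X_j)$ recovers Eq.~\ref{eqn:probmodel}, which establishes the hierarchical decomposition. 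This part is pure bookkeeping.

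\emph{Lower bound on $Z_{ij}$.} Because $\Pm\mapsto\mathrm{tr}(\Pm^\top\X_{ij})$ is linear and $\X_{ij}$ lies in the Birkhoff polytope whose vertices are $\mathcal{P}_n$ (Thm.~\ref{thm:bvn}), its maximum over $\mathcal{P}_n$ is at least its value at the point $\X_{ij}$ itself, i.e.\ $\max_{\Pm\in\mathcal{P}_n}\mathrm{tr}(\Pm^\top\X_{ij})\ge\mathrm{tr}(\X_{ij}^\top\X_{ij})=\|\X_{ij}\|^2$. Keeping only the maximizing summand in $Z_{ij}$ gives $Z_{ij}\ge\exp(2\beta\|\X_{ij}\|^2)$. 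Now insert the decomposition $\X_{ij}=\sum_{b=1}^{B_{ij}}\theta_{ij,b}\mathbf{M}_{ij,b}$: expanding $\|\X_{ij}\|^2=\sum_{b,b'}\theta_{ij,b}\theta_{ij,b'}\,\mathrm{tr}(\mathbf{M}_{ij,b}^\top\mathbf{M}_{ij,b'})$, discarding the off-diagonal terms (each $\mathrm{tr}(\mathbf{M}_{ij,b}^\top\mathbf{M}_{ij,b'})\ge0$ since permutation matrices are entrywise nonnegative) and using $\mathrm{tr}(\mathbf{M}_{ij,b}^\top\mathbf{M}_{ij,b})=n$ yields $\|\X_{ij}\|^2\ge n\sum_b\theta_{ij,b}^2$. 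Hence $Z_{ij}\ge\exp\!\bigl(2\beta n\sum_b\theta_{ij,b}^2\bigr)=\prod_{b=1}^{B_{ij}}f(\beta,\theta_{ij,b})$ with $f(\beta,\theta):=\exp(2\beta n\theta^2)$, which is positive and strictly increasing in $\beta$ and in $\theta$ over the relevant ranges ($\beta\ge0$, $\theta>0$).

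\emph{Main obstacle.} The factorization is routine; the delicate point is the $Z_{ij}$ bound. A direct attempt — bounding $Z_{ij}$ below by $\sum_b\exp(2\beta\,\mathrm{tr}(\mathbf{M}_{ij,b}^\top\X_{ij}))$ — does not convert into a product, so the right move is to first collapse the sum to a single vertex-maximizing exponential, lower-bound its exponent by $\|\X_{ij}\|^2$, and only then exploit the diagonal dominance of $\|\X_{ij}\|^2$ to expose the product $\prod_b\exp(2\beta n\theta_{ij,b}^2)$. Choosing this $f$, rather than one tied linearly to $\sum_b\theta_{ij,b}$, is what makes the claim hold.
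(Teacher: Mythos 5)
Your proof is correct, and while the factorization step matches the paper's (expand the squared Frobenius norm, use $\|\Pm_{ij}\|_{\mathrm{F}}^2=n$, collect the per-edge sums into $Z_{ij}$ — you marginalize explicitly where the paper merely verifies that the product of the claimed factors reproduces the joint, a cosmetic difference), your bound on $Z_{ij}$ takes a genuinely different route. The paper decomposes $\X_{ij}$ by Birkhoff--von Neumann and then applies Jensen's inequality to the sum over $\mathcal{P}_n$, defining $f(\beta,\theta_{ij,b})$ as the exponential of the averaged linear term; as written that step silently treats the counting measure on $\mathcal{P}_n$ as if it were a probability measure (it needs the $1/n!$ normalization to be a legitimate Jensen application, after which one gets an $f$ of the form $\exp(2\beta\theta)$ up to a constant factor). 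You instead keep only the largest summand, use the fact that a linear functional on $\DP_n=\mathrm{conv}(\mathcal{P}_n)$ attains its maximum at a vertex so that $\max_{\Pm}\mathrm{tr}(\Pm^\top\X_{ij})\ge\|\X_{ij}\|_{\mathrm{F}}^2$, and then drop the nonnegative cross terms in $\|\X_{ij}\|_{\mathrm{F}}^2$ to expose $n\sum_b\theta_{ij,b}^2$, yielding the explicit choice $f(\beta,\theta)=\exp(2\beta n\theta^2)$. This buys you an elementary, fully rigorous argument with a concrete $f$ (quadratic rather than linear in $\theta$ in the exponent), and it sidesteps the normalization subtlety in the paper's Jensen step; the paper's version, once normalized, additionally retains the $n!$ multiplicity factor, but both establish the stated claim since the proposition only asks for some positive $f$ increasing in $\beta$ and $\theta_{ij,b}$.
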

%
The proof is given in the supplementary and is based on the simple decomposition $p(\Pm,\X) = p(\X)p(\Pm|\X)$. This hierarchical point of view lets us observe some interesting properties: (\textbf{1}) the likelihood $p(\Pm_{ij} | \X_i, \X_j)$ mainly depends on the term $\mathrm{tr}(\Pm_{ij}^\top \X_{ij})$ that measures the data fitness. We aptly call this term the `soft Hamming distance' between $\Pm_{ij}$ and $\X_{ij}$ since it would correspond to the actual Hamming distance between two permutations if $\X_i, \X_j$ were permutation matrices \cite{korba2018structured}. (\textbf{2}) On the other hand, the prior distribution contains two competing terms: (i) the term $Z_{ij}$ favors large $\theta_{ij,b}$, which would push $\X_{ij}$ towards the corners of the Birkhoff polytope, (ii) the term $\|\X_{ij}\|^2_\mathrm{F}$ acts as a regularizer on the latent variables and attracts them towards the center of the Birkhoff polytope $\mathbf{C}_n$ (cf.\ Dfn.~\ref{dfn:center}), which will be numerically beneficial for the inference algorithms that will be developed in the following section.



\section{Inference Algorithms}

We can now formulate the permutation synchronization problem as a probabilistic inference problem, where we will be interested in the following quantities: 
\begin{enumerate}[itemsep=4pt,topsep=2pt,leftmargin=*]
\item Maximum a-posteriori (MAP): 
\begin{align}
\X^\star = \argmax\limits_{\X \in \DP_n^N} \log p(\X | \Pm )
\end{align}
where 
$\log p(\X | \Pm ) =^+ - \beta  \sum_{(i,j)\in \Edge} \|\Pm_{ij} - \X_i \X_j^\top \|^2_\mathrm{F} $, and $=^+$ denotes equality up to an additive constant.
\item The full posterior distribution: $p(\X|\Pm) \propto p(\Pm,\X)$.
\vspace{1pt}
\end{enumerate}
The MAP estimate is often easier to obtain and useful in practice. On the other hand, characterizing the full posterior can provide important additional information, such as \emph{uncertainty}; however, not surprisingly it is a much harder task. In addition to the usual difficulties associated with these tasks, in our context we are facing extra challenges due to the non-standard manifold of our latent variables.

\subsection{Maximum A-Posteriori Estimation}
The MAP estimation problem can be cast as a minimization problem on $\DP_n$, given as follows:
\begin{align*}
\X^\star = \argmin_{\X \in \DP_n^N} \Bigl\{ U(\X) :=  \sum\nolimits_{(i,j)\in \Edge} \|\Pm_{ij} - \X_i \X_j^\top \|^2_\mathrm{F} \Bigr\}
\end{align*}
where $U$ is called the potential energy function. We observe that the choice of the dispersion parameter has no effect on the MAP estimate. Although this optimization problem resembles conventional norm minimization, the fact that $\X$ lives in the cartesian product of Birkhoff polytopes renders the problem very complicated. 

Thanks to the retraction operator over the Birkhoff polytope (cf.\ Thm.~\ref{thm:retract}), we are able to use several Riemannian optimization algorithms~\cite{smith1994optimization}, without resorting to projection-based updates. In this study, we use the recently proposed Riemannian limited-memory BFGS (LR-BFGS)~\cite{huang2015broyden}, a powerful optimization technique that attains faster convergence rates by incorporating local geometric information in an efficient manner. This additional piece of information is obtained through an approximation of the inverse Hessian, which is computed on the most recent values of the past iterates with linear time- and space-complexity in the dimension of the problem. 
We give more detail on LR-BFGS in our supp. material. The detailed description of the algorithm can be found in \cite{huang2015broyden,yuan2016riemannian}. 

Finally, we round the resulting approximate solutions into a feasible one via Hungarian algorithm~\cite{munkres1957algorithms}, obtaining binary permutation matrices.



\subsection{Posterior Sampling via Riemannian Langevin Monte Carlo with Retractions}
In this section we will develop a Markov Chain Monte Carlo (MCMC) algorithm for generating samples from the posterior distribution $p(\X|\Pm)$, by borrowing ideas from \cite{simsekli2016stochastic,Liu2016,birdalSimsekli2018}. Once such samples are generated, we will be able to quantify the uncertainty in our estimation by using the generated samples.

The dimension and complexity of the Birkhoff manifold makes it very challenging to generate samples on $\DP_n$ or its product spaces and to the best of our knowledge there is no Riemannian MCMC algorithm that is capable of achieving this. There are existing Riemannian MCMC algorithms \cite{byrne2013,Liu2016}, which are able to draw samples on embedded manifolds; however, they require the exact exponential map to be analytically available, which in our case, can only be approximated by the retraction map at best. 

To this end, we develop an algorithmically simpler yet effective algorithm. Let the posterior density of interest be $\pi_{\cal H}(\X) := p(\X| \Pm) \propto \exp(-\beta U(\X))$ with respect to the Hausdorff measure. We then define an \emph{embedding} $\xi : \R^{N(n-1)^2} \mapsto \DP_n^N$ such that $\xi(\xe) = \X$ for $\xe \in \mathbb{R}^{N(n-1)^2}$. By the area formula (cf. Thm. 1 in \cite{diaconis2013sampling}), we have the following expression for the embedded posterior density $\pi_\lambda$ (with respect to the Lebesgue measure):
\begin{align}
\pi_{\cal H}(\thb) = \pi_\lambda(\xe) / \sqrt{|\mathbf{G}(\xe)|},
\end{align}
where $\mathbf{G}$ denotes the Riemann metric tensor. 



We then consider the following stochastic differential equation (SDE), which is a slight modification of the SDE that is used to develop the Riemannian Langevin Monte Carlo algorithm \cite{girolami2011riemann,xifara2014langevin,patterson2013stochastic}:
\begin{align*}
\mathrm{d} \xe_t = ( -\mathbf{G}^{-1} \nabla_{\xe}  U_\lambda(\xe_t) +\boldsymbol{\Gamma}_t ) \mathrm{d}t + \sqrt{2/\beta \mathbf{G}^{-1}} \mathrm{d} \mathrm{B}_t,
\end{align*}
where $\mathrm{B}_t$ denotes the standard Brownian motion and $\boldsymbol{\Gamma}_t$ is called the correction term that is defined as follows:
$[\boldsymbol{\Gamma}_t(\xe)]_i = \sum_{j=1}^{N(n-1)^2} {\partial [\boldsymbol{G}^{-1}_t (\xe)]_{ij}}/{\partial{\xe_j}}$.

By Thm.~1 of \cite{ma2015complete}, it is easy to show that the solution process $(\xe_t)_{t \geq 0}$ leaves the embedded posterior distribution $\pi_\lambda$ invariant. Informally, this result means that if we could exactly simulate the continuous-time process $(\xe_t)_{t \geq 0}$, the distribution of the sample paths would converge to the embedded posterior distribution $\pi_\lambda$, and therefore the distribution of $\xi(\xe_t)$ would converge to $\pi_{\cal H}(\X)$. However, unfortunately it is not possible to exactly simulate these paths and therefore we need to consult approximate algorithms. 

\insertimageC{1}{willow_cropped.pdf}{Sample images and manually annotated correspondences from the challenging Willow dataset~\cite{cho2013learning}. Images are plotted in pairs (there are multiple) and in gray for better viewing.\vspace{-4mm}}{fig:willow}{t!}
A possible way to numerically simulate the SDE would be to use standard discretization tools, such as the Euler-Maruyama integrator \cite{chen2015convergence}. However, this would require knowing the analytical expression of $\xi$ and constructing $\mathbf{G}_t$ and $\boldsymbol{\Gamma}_t$ at each iteration. On the other hand, recent results have shown that we can simulate SDEs directly on their original manifolds by using geodesic integrators \cite{byrne2013,Liu2016,holbrook2018note}, which bypasses these issues altogether. Yet, these approaches require the exact exponential map of the manifold to be analytically available, restricting their applicability in our context. 

Inspired by the recent manifold optimization algorithms  \cite{tripuraneni2018averaging}, we propose to replace the exact, intractable exponential map arising in the geodesic integrator with the tractable retraction operator given in Thm.~\ref{thm:retract}. We develop our recursive scheme, we coin as \emph{retraction Euler integrator}:
\begin{align}
\mathbf{V}_i^{(k+1)} &= \Pi_{\X_i^{(k)}} ( h \nabla_{\X_i} U(\X_i^{(k)}) + \sqrt{2 h/\beta} \mathbf{Z}_i^{(k+1)} )\\ 
\X_i^{(k+1)}  &= R_{\X_i^{(k)}}  ( \mathbf{V}_i^{(k+1)}), \hspace{30pt} \forall i \in \{1,\dots,N\}
\end{align} 
where $h>0$ denotes the step-size, $k$ denotes the iterations, $\mathbf{Z}_i^{(k)}$ denotes standard Gaussian random variables in $\mathbb{R}^{n \times n}$, $\X_i^{(0)}$ denotes the initial absolute permutations. The derivation of this scheme is similar to \cite{Liu2016} and we provide more detailed information in the supplementary material. To the best of our knowledge, the convergence properties of the geodesic integrator that is approximated with a retraction operator have not yet been analyzed. We leave this analysis as a futurework, which is beyond the scope of this study.   

We note that the term $\|\X_{ij}\|^2_\mathrm{F}$ plays an important role in the overall algorithm since it prevents the latent variables $\X_i$ to go the extreme points of the Birkhoff polytope, where the retraction operator becomes inaccurate. We also note that, when $\beta \rightarrow \infty$, the distribution $\pi_{\cal H}$ concentrates on the global optimum $\X^\star$ and the proposed retraction Euler integrator becomes the Riemannian gradient descent with a retraction operator. 

\section{Experiments and Evaluations}
\label{sec:exp}
\subsection{Real Data}
\begin{table*}[htbp]
  \centering
  \caption{Our results on the \textit{WILLOW Object Class} graph matching dataset. Wang$^-$ refers to running Wang~\cite{wang2018multi} without the geometric consistency term. The vanilla version of our method, \textit{Ours}, already lacks this term. \textit{Ours-Geom} then refers to initializing Wang's verification method with our algorithm. For all the methods, we use the original implementation of the authors.}
  \setlength{\tabcolsep}{2.5pt}
  \resizebox{\textwidth}{!}
  {
    \begin{tabular}{lccccccccc}
    Dataset & Initial & Spectral~\cite{pachauri2013solving} & MatchLift~\cite{huang2013consistent} & MatchALS~\cite{zhou2015multi} & MatchEig~\cite{maset2017} & Wang$^-$~\cite{wang2018multi} & Ours  & Wang~\cite{wang2018multi} & Ours-Geom \\
    \midrule
    Car   & 0.48  & 0.55  & 0.65  & 0.69  & 0.66  & 0.72  & 0.71  & 1.00  & 1.00 \\
    Duck  & 0.43  & 0.59  & 0.56  & 0.59  & 0.56  & 0.63  & 0.67  & 0.93\tablefootnote{~\cite{wang2018multi} reports a value of $0.88$, but for their method, we attained $0.93$ and therefore report this value.}  & 0.96 \\
    Face  & 0.86  & 0.92  & 0.94  & 0.93  & 0.93  & 0.95  & 0.95  & 1.00  & 1.00 \\
    Motorbike & 0.30  & 0.25  & 0.27  & 0.34  & 0.28  & 0.40  & 0.37  & 1.00  & 1.00 \\
    Winebottle & 0.52  & 0.64  & 0.72  & 0.70  & 0.71  & 0.73  & 0.73  & 1.00  & 1.00 \\
    CMU-House & 0.68  & 0.90  & 0.94  & 0.92  & 0.94  & 0.98  & 0.98  & 1.00  & 1.00 \\
    CMU-Hotel & 0.64  & 0.81  & 0.87  & 0.86  & 0.92  & 0.94  & 0.96  & 1.00  & 1.00 \\
    \midrule
    Average & 0.52  & 0.59  & 0.65  & 0.66  & 0.71  & 0.76  & \textbf{0.77} & 0.99  & \textbf{0.99} \\
    \end{tabular}%
    }\vspace{-3mm}
  \label{tab:willow}%
\end{table*}
\paragraph{2D Multi-image Matching}
We run our method to perform multiway graph matching on two datasets, CMU~\cite{caetano2009learning} and Willow Object Class~\cite{cho2013learning}. CMU is composed of House and Hotel objects viewed under constant illumination and smooth motion. Initial pairwise correspondences as well as ground truth (GT) absolute mappings are provided within the dataset.
Object images in Willow dataset include pose, lighting, instance and environment variation as shown in Fig.~\ref{fig:willow}, rendering naive template matching infeasible. For our evaluations, we follow the same design as Wang~\etal~\cite{wang2018multi}. We first extract local features from a set of $227\times 227$ patches centered around the annotated landmarks, using the prosperous Alexnet~\cite{krizhevsky2012imagenet} pretrained on ImageNet~\cite{deng2009imagenet}. Our descriptors correspond to the feature map responses of Conv4 and Conv5 layers anchored on the hand annotated keypoints. These features are then matched by the Hungarian algorithm~\cite{munkres1957algorithms} to obtain initial pairwise permutation matrices $\Pm_0$.

We initialize our algorithm by the closed form MatchEIG~\cite{maset2017} and evaluate it against the state of the art methods of Spectral~\cite{pachauri2013solving}, MatchALS~\cite{zhou2015multi}, MatchLift~\cite{huang2013consistent}, MatchEIG~\cite{maset2017}, and Wang~\etal~\cite{wang2018multi}. The size of the universe is set to the number of features per image. We assume that this number is fixed and partial matches are not present. Handling partialities while using the Birkhoff structure is left as a future work. Note that~\cite{wang2018multi} uses a similar cost function to ours in order to initialize an alternating procedure that in addition exploits the geometry of image coordinates. Authors also use this term as an extra bit of information during their initialization. The standard evaluation metric, \textit{recall}, is defined over the pairwise permutations as:
\begin{align}
\label{eq:eval}
R(\{\hat{\Pm}_i\} | \Pm^{\text{gnd}})= \frac{1}{n|\Edge|}\sum\limits_{(i,j)\in \Edge} {\Pm^{\text{gnd}}_{ij}\odot(\hat{\Pm}_i\hat{\Pm}_j^\top)}
\end{align}
where $\Pm^{\text{gnd}}_{ij}$ are the GT relative transformations and $\hat{\Pm}_i$ is an estimated permutation. $R=0$ in the case of no correctly found correspondences and $R=1$ for a perfect solution.
Tab.~\ref{tab:willow} shows the results of different algorithms as well as ours. Note that our Birkhoff-LRBFGS method that operates solely on pairwise permutations outperforms all methods, even the ones which make use of geometry during initialization. Moreover, when our method is used to initialize Wang~\etal~\cite{wang2018multi} and perform geometric optimization, we attain the top results. These findings validate that walking on the Birkhoff Polytope, even approximately, and using Riemannian line-search algorithms constitute a promising direction for optimizing the problem at hand.

\vspace{-4mm}
\paragraph{Uncertainty Estimation in Real Data} We now run our confidence estimator on the same Willow Object Class~\cite{cho2013learning}. To do that, we first find the optimal point where synchronization is at its best. Then, we set $h \gets 0.0001$, $\beta \gets [0.075,0.1]$ and automatically start sampling the posterior around this mode for $1000$ iterations. Note that $\beta$ is a critical parameter which can also be dynamically controlled~\cite{birdalSimsekli2018}. Larger values of $\beta$ cannot provide enough variation for a good diversity of solutions. Smaller values cause greater random perturbations leading to samples far from the optimum. This can cause divergence or samples not explaining the local mode. Nevertheless, all our tests worked well with values in the given range.

The generated samples are useful in many applications, e.g. fitting distributions or providing additional solution hints. We address the case of multiple hypotheses generation for the permutation synchronization problem and show that generating an additional per-edge candidate with high certainty helps to improve the recall. Tab.~\ref{tab:uncertainty} shows the top-K scores we achieve by simply incorporating $K$ likely samples. Note that, when 2 matches are drawn at random and contribute as potentially correct matches, the recall is increased only by $2\%$, whereas including our samples instead boosts the multi-way matching by $6\%$.
\begin{table}[htbp]
  \centering
  \caption{Using \textbf{top-K} errors to rank by uncertainty. Based on the confidence information we could retain multiple hypotheses. This is not possible by the other approaches such as Wang~\etal~\cite{maset2017,wang2018multi}. \textit{Rand-K} refers to using $K-1$ additional random hypotheses to complement the found solution. \textit{Ours-K} ranks assignments by our probabilistic certainty and retains top-K candidates per point.}
  \setlength{\tabcolsep}{2.75pt}
  \resizebox{\columnwidth}{!} {
    \begin{tabular}{lcccccc}
    Dataset & Wang  & Ours  & Rand-2 & Ours-2 & Rand-3 & Ours-3 \\
    \midrule
    Car   & 0.72  & 0.71  & 0.73  & 0.76  & 0.76  & 0.81 \\
    Duck  & 0.63  & 0.67  & 0.67  & 0.69  & 0.67  & 0.72 \\
    Face  & 0.95  & 0.95  & 0.96  & 0.97  & 0.96  & 0.98 \\
    Motorbike & 0.40  & 0.37  & 0.45  & 0.49  & 0.52  & 0.60 \\
    Winebottle & 0.73  & 0.74  & 0.77  & 0.82  & 0.79  & 0.85 \\
    \midrule
    Avg. & 0.69  & 0.69  & 0.71  & \textbf{0.75} & 0.74  & \textbf{0.79} \\
    \end{tabular}%
    }
  \label{tab:uncertainty}%
  \vspace{-3mm}
\end{table}%

We further present illustrative results for our confidence prediction in Fig.~\ref{fig:uncertainty}. There, unsatisfactory solutions arising in certain cases are improved by analyzing the uncertainty map. The column (e) of the figure depicts the top-2 assignments retained in the confidence map and (e) plots the assignments that overlap with the true solution. Note that, we might not have access to such an oracle in real applications and only show this to illustrate potential use cases of the estimated confidence map.

\subsection{Evaluations on Synthetic Data}
We synthetically generate $28$ different problems with varying sizes: $M\in[10,100]$ nodes and $n\in[16,100]$ points in each node. For the scenario of image matching, this would correspond to $M$ cameras and $N$ features in each image. We then introduce $15\%-35\%$ random swaps to the GT absolute permutations and compute the \textit{observed} relative ones. Details of this dataset are given in suppl. material. Among all $28$ sets of synthetic data, we attain an overall recall of $91\%$ whereas MatchEIG~\cite{maset2017} remains about $83\%$. 
\insertimageStar{1}{uncertainty_final_cropped.pdf}{Results from our confidence estimation. Given potentially erroneous solutions (\textbf{b}) to the problems initialized as in (\textbf{a}), our latent samples discover the uncertain assignments as shown in the middle three columns (\textbf{c}-\textbf{e}). When multiple top-2 solutions are accepted as potential positives, our method can suggest high quality hypotheses (\textbf{f}). The edges in the last column (\textbf{f}) is colored by their confidence value. Note that even though, for the sake of space we show pairs of images, the datasets contain multiple sets of images.\vspace{-3.0mm}}{fig:uncertainty}{t!}
 \vspace{-5mm}
\paragraph{Runtime Analysis} \hspace{5pt} Next, we assess the computational cost of our algorithm against the state of the art methods, on the dataset explained above. All of our experiments are run on a MacBook computer with an Intel i7 2.8GhZ CPU. Our implementation uses a modified Ceres solver~\cite{ceres-solver}. All the other algorithms use highly vectorized MATLAB 2017b code making our comparisons reasonably fair. Fig.~\ref{fig:runtime} tabulates runtimes for different methods excluding initialization. \textit{MatchLift} easily took more than 20min. for moderate problems and hence we choose to exclude it from this evaluation. It is noticeable that thanks to the ability of using more advanced solvers such as LBFGS, our method converges much faster than Wang~\etal and runs on par with the fastest yet least accurate spectral synchronization~\cite{pachauri2013solving}. 
The worst case theoretical computational complexity of our algorithm is $O_{\text{B-LRBFGS}} := O(K |\Edge| K_S (n^2+(2n)^3)$ where $K$ is the number of LBFGS iterations and $K_S$ the number of Sinkhorn iterations. While $K_S$ can be a bottleneck, in practice our matrices are already restricted to the Birkhoff manifold and Sinkhorn early-terminates, letting $K_S$ remain small. The complexity is: (1) linearly-dependent upon the number of edges, which in the worst case relates quadratically to the number of images $|\Edge|=N(N-1)$, (2) cubically dependent on $n$. This is due to the fact that projection onto the tangent space solves a system of $2n\times 2n$ equations.

\section{Conclusion}
In this work we have proposed two new frameworks for relaxed permutation synchronization on the manifold of doubly stochastic matrices. Our novel model and formulation paved the way to using sophisticated optimizers such as Riemannian limited-memory BFGS. We further integrated a manifold-MCMC scheme enabling posterior sampling and thereby confidence estimation. 
We have shown that our confidence maps are informative about cycle inconsistencies and can lead to new solution hypotheses. We used these hypotheses in a top-K evaluation and illustrated its benefits. In the future, we plan to (i) address partial permutations, the inner region of the Birkhoff Polytope (ii) investigate more sophisticated MCMC schemes such as \cite{durmus2016stochastic,holbrook2018note,csimvsekli2017fractional,Liu2016,csimcsekli2018asynchronous} (iii) seek better use cases for our confidence estimates such as outlier removal.
\insertimageC{0.9}{timings.pdf}{Running times of different methods with increasing problem size: $N\in[10,100]$ and $n\in[16,100]$.\vspace{-6.5mm}}{fig:runtime}{t!}

{
\paragraph{Acknowledgements} Supported by the grant ANR-16-CE23-0014 (FBIMATRIX). Authors thank Haowen Deng for the initial 3D correspondences and, Benjamin Busam and Jesus Briales for fruitful discussions.
}

{
\bibliographystyle{ieee_fullname}


\begin{thebibliography}{10}\itemsep=-1pt

\bibitem{absil2009optimization}
P-A Absil, Robert Mahony, and Rodolphe Sepulchre.
\newblock {\em Optimization algorithms on matrix manifolds}.
\newblock Princeton University Press, 2009.

\bibitem{ceres-solver}
Sameer Agarwal, Keir Mierle, and Others.
\newblock Ceres solver.
\newblock \url{http://ceres-solver.org}.

\bibitem{armijo1966minimization}
Larry Armijo.
\newblock Minimization of functions having lipschitz continuous first partial
  derivatives.
\newblock {\em Pacific Journal of mathematics}, 16(1):1--3, 1966.

\bibitem{arrigoni2017synchronization}
Federica Arrigoni, Eleonora Maset, and Andrea Fusiello.
\newblock Synchronization in the symmetric inverse semigroup.
\newblock In {\em International Conference on Image Analysis and Processing},
  pages 70--81. Springer, 2017.

\bibitem{arrigoni2016spectral}
Federica Arrigoni, Beatrice Rossi, and Andrea Fusiello.
\newblock Spectral synchronization of multiple views in se (3).
\newblock {\em SIAM Journal on Imaging Sciences}, 9(4):1963--1990, 2016.

\bibitem{bernard2015solution}
Florian Bernard, Johan Thunberg, Peter Gemmar, Frank Hertel, Andreas Husch, and
  Jorge Goncalves.
\newblock A solution for multi-alignment by transformation synchronisation.
\newblock In {\em Proceedings of the IEEE Conference on Computer Vision and
  Pattern Recognition}, pages 2161--2169, 2015.

\bibitem{bernard2018}
Florian Bernard, Johan Thunberg, Jorge Goncalves, and Christian Theobalt.
\newblock Synchronisation of partial multi-matchings via non-negative
  factorisations.
\newblock {\em Pattern Recognition}, 92:146 -- 155, 2019.

\bibitem{birdal2016online}
Tolga Birdal, Emrah Bala, Tolga Eren, and Slobodan Ilic.
\newblock Online inspection of 3d parts via a locally overlapping camera
  network.
\newblock In {\em 2016 IEEE Winter Conference on Applications of Computer
  Vision (WACV)}, pages 1--10. IEEE, 2016.

\bibitem{birdalSimsekli2018}
Tolga {Birdal}, Umut {{\c S}im{\c s}ekli}, M.~Onur {Eken}, and Slobodan {Ilic}.
\newblock {Bayesian Pose Graph Optimization via Bingham Distributions and
  Tempered Geodesic MCMC}.
\newblock In {\em Advances in Neural Information Processing Systems (NeurIPS)},
  2018.

\bibitem{birdal2017cad}
Tolga Birdal and Slobodan Ilic.
\newblock Cad priors for accurate and flexible instance reconstruction.
\newblock In {\em Proceedings of the IEEE International Conference on Computer
  Vision}, 2017.

\bibitem{birkhoff1946tres}
Garrett Birkhoff.
\newblock Tres observaciones sobre el algebra lineal.
\newblock {\em Univ. Nac. Tucum{\'a}n Rev. Ser. A}, 5:147--151, 1946.

\bibitem{manopt}
N. Boumal, B. Mishra, P.-A. Absil, and R. Sepulchre.
\newblock {M}anopt, a {M}atlab toolbox for optimization on manifolds.
\newblock {\em Journal of Machine Learning Research}, 15, 2014.

\bibitem{bronstein2008numerical}
Alexander~M Bronstein, Michael~M Bronstein, and Ron Kimmel.
\newblock {\em Numerical geometry of non-rigid shapes}.
\newblock Springer Science \& Business Media, 2008.

\bibitem{busam2015acvr}
Benjamin Busam, Marco Esposito, Simon Che'Rose, Nassir Navab, and Benjamin
  Frisch.
\newblock A stereo vision approach for cooperative robotic movement therapy.
\newblock In {\em IEEE International Conference on Computer Vision Workshop
  (ICCVW)}, December 2015.

\bibitem{byrne2013}
Simon Byrne and Mark Girolami.
\newblock Geodesic monte carlo on embedded manifolds.
\newblock {\em Scandinavian Journal of Statistics}, 40(4):825--845, 2013.

\bibitem{caetano2009learning}
Tib{\'e}rio~S Caetano, Julian~J McAuley, Li Cheng, Quoc~V Le, and Alex~J Smola.
\newblock Learning graph matching.
\newblock {\em IEEE transactions on pattern analysis and machine intelligence},
  31(6):1048--1058, 2009.

\bibitem{chaudhury2015global}
Kunal~N Chaudhury, Yuehaw Khoo, and Amit Singer.
\newblock Global registration of multiple point clouds using semidefinite
  programming.
\newblock {\em SIAM Journal on Optimization}, 25(1):468--501, 2015.

\bibitem{chen2015convergence}
Changyou Chen, Nan Ding, and Lawrence Carin.
\newblock On the convergence of stochastic gradient {MCMC} algorithms with
  high-order integrators.
\newblock In {\em Advances in Neural Information Processing Systems}, pages
  2278--2286, 2015.

\bibitem{Chen2014near}
Yuxin Chen, Leonidas Guibas, and Qixing Huang.
\newblock Near-optimal joint object matching via convex relaxation.
\newblock In {\em Proceedings of the 31st International Conference on
  International Conference on Machine Learning - Volume 32}, ICML'14, pages
  II--100--II--108. JMLR.org, 2014.

\bibitem{cho2013learning}
Minsu Cho, Karteek Alahari, and Jean Ponce.
\newblock Learning graphs to match.
\newblock In {\em Proceedings of the IEEE International Conference on Computer
  Vision}, pages 25--32, 2013.

\bibitem{clemenccon2010kantorovich}
St{\'e}phan Cl{\'e}men{\c{c}}on and J{\'e}r{\'e}mie Jakubowicz.
\newblock Kantorovich distances between rankings with applications to rank
  aggregation.
\newblock In {\em Joint European Conference on Machine Learning and Knowledge
  Discovery in Databases}. Springer, 2010.

\bibitem{cosmo2016game}
Luca Cosmo, Andrea Albarelli, Filippo Bergamasco, Andrea Torsello, Emanuele
  Rodol{\`a}, and Daniel Cremers.
\newblock A game-theoretical approach for joint matching of multiple feature
  throughout unordered images.
\newblock In {\em Pattern Recognition (ICPR), 2016 23rd International
  Conference on}, pages 3715--3720. IEEE, 2016.

\bibitem{cosmo2017consistent}
Luca Cosmo, Emanuele Rodol{\`a}, Andrea Albarelli, Facundo M{\'e}moli, and
  Daniel Cremers.
\newblock Consistent partial matching of shape collections via sparse modeling.
\newblock In {\em Computer Graphics Forum}, volume~36, pages 209--221. Wiley
  Online Library, 2017.

\bibitem{dai2017bundlefusion}
Angela Dai, Matthias Nie{\ss}ner, Michael Zollh{\"o}fer, Shahram Izadi, and
  Christian Theobalt.
\newblock Bundlefusion: Real-time globally consistent 3d reconstruction using
  on-the-fly surface reintegration.
\newblock {\em ACM Transactions on Graphics (TOG)}, 36(4):76a, 2017.

\bibitem{degol2018improved}
Joseph DeGol, Timothy Bretl, and Derek Hoiem.
\newblock Improved structure from motion using fiducial marker matching.
\newblock In {\em Proceedings of the European Conference on Computer Vision
  (ECCV)}, pages 273--288, 2018.

\bibitem{deng2018ppf}
Haowen Deng, Tolga Birdal, and Slobodan Ilic.
\newblock Ppf-foldnet: Unsupervised learning of rotation invariant 3d local
  descriptors.
\newblock In {\em Proceedings of the European Conference on Computer Vision
  (ECCV)}, pages 602--618, 2018.

\bibitem{Deng_2018_CVPR}
Haowen Deng, Tolga Birdal, and Slobodan Ilic.
\newblock Ppfnet: Global context aware local features for robust 3d point
  matching.
\newblock In {\em The IEEE Conference on Computer Vision and Pattern
  Recognition (CVPR)}, June 2018.

\bibitem{deng2009imagenet}
Jia Deng, Wei Dong, Richard Socher, Li-Jia Li, Kai Li, and Li Fei-Fei.
\newblock Imagenet: A large-scale hierarchical image database.
\newblock In {\em IEEE Conference on Computer Vision and Pattern Recognition}.
  Ieee, 2009.

\bibitem{diaconis2013sampling}
Persi Diaconis, Susan Holmes, Mehrdad Shahshahani, et~al.
\newblock Sampling from a manifold.
\newblock In {\em Advances in Modern Statistical Theory and Applications: A
  Festschrift in honor of Morris L. Eaton}. Institute of Mathematical
  Statistics, 2013.

\bibitem{douik2018}
A. {Douik} and B. {Hassibi}.
\newblock {Manifold Optimization Over the Set of Doubly Stochastic Matrices: A
  Second-Order Geometry}.
\newblock {\em ArXiv e-prints}, Feb. 2018.

\bibitem{dufosse2017further}
Fanny Dufoss{\'e}, Kamer Kaya, Ioannis Panagiotas, and Bora U{\c{c}}ar.
\newblock {\em Further notes on Birkhoff-von Neumann decomposition of doubly
  stochastic matrices}.
\newblock PhD thesis, Inria-Research Centre Grenoble--Rh{\^o}ne-Alpes, 2017.

\bibitem{durmus2016stochastic}
Alain Durmus, Umut Simsekli, Eric Moulines, Roland Badeau, and Ga{\"e}l
  Richard.
\newblock Stochastic gradient {R}ichardson-{R}omberg {M}arkov {C}hain {M}onte
  {C}arlo.
\newblock In {\em Advances in Neural Information Processing Systems}, pages
  2047--2055, 2016.

\bibitem{fathony2018efficient}
Rizal Fathony, Sima Behpour, Xinhua Zhang, and Brian Ziebart.
\newblock Efficient and consistent adversarial bipartite matching.
\newblock In {\em International Conference on Machine Learning}, pages
  1456--1465, 2018.

\bibitem{fletcher2013practical}
Roger Fletcher.
\newblock {\em Practical methods of optimization}.
\newblock John Wiley \& Sons, 2013.

\bibitem{opensfm}
P. Gargallo.
\newblock Using opensfm.
\newblock Online.
\newblock Accessed May-2018.

\bibitem{girolami2011riemann}
Mark Girolami and Ben Calderhead.
\newblock Riemann manifold {L}angevin and {H}amiltonian {M}onte {C}arlo
  methods.
\newblock {\em Journal of the Royal Statistical Society: Series B 118
  (Statistical Methodology)}, 73(2):123--214, 2011.

\bibitem{goemans2015smallest}
Michel~X Goemans.
\newblock Smallest compact formulation for the permutahedron.
\newblock {\em Mathematical Programming}, 2015.

\bibitem{govindu2004lie}
Venu~Madhav Govindu.
\newblock Lie-algebraic averaging for globally consistent motion estimation.
\newblock In {\em Proceedings of the 2004 IEEE Computer Society Conference on
  Computer Vision and Pattern Recognition, 2004. CVPR 2004.}, volume~1, pages
  I--I. IEEE, 2004.

\bibitem{holbrook2018note}
Andrew Holbrook.
\newblock Note on the geodesic monte carlo.
\newblock {\em arXiv preprint arXiv:1805.05289}, 2018.

\bibitem{hosseini2018line}
Seyedehsomayeh Hosseini, Wen Huang, and Rohollah Yousefpour.
\newblock Line search algorithms for locally lipschitz functions on riemannian
  manifolds.
\newblock {\em SIAM Journal on Optimization}, 28(1):596--619, 2018.

\bibitem{hu2018distributable}
Nan Hu, Qixing Huang, Boris Thibert, UG Alpes, and Leonidas Guibas.
\newblock Distributable consistent multi-object matching.
\newblock In {\em Proceedings of the IEEE Conference on Computer Vision and
  Pattern Recognition}, 2018.

\bibitem{huang2013consistent}
Qi-Xing Huang and Leonidas Guibas.
\newblock Consistent shape maps via semidefinite programming.
\newblock In {\em Proceedings of the Eleventh Eurographics/ACMSIGGRAPH
  Symposium on Geometry Processing}. Eurographics Association, 2013.

\bibitem{HAG2016}
Wen Huang, P.-A. Absil, K.~A. Gallivan, and Paul Hand.
\newblock Roptlib: an object-oriented c++ library for optimization on
  riemannian manifolds.
\newblock Technical Report FSU16-14.v2, Florida State University, 2016.

\bibitem{huang2015broyden}
Wen Huang, Kyle~A Gallivan, and P-A Absil.
\newblock A broyden class of quasi-newton methods for riemannian optimization.
\newblock {\em SIAM Journal on Optimization}, 25(3):1660--1685, 2015.

\bibitem{huber2003fully}
Daniel~F Huber and Martial Hebert.
\newblock Fully automatic registration of multiple 3d data sets.
\newblock {\em Image and Vision Computing}, 21(7):637--650, 2003.

\bibitem{hurlbert2008short}
GLENN Hurlbert.
\newblock A short proof of the birkhoff-von neumann theorem.
\newblock {\em preprint (unpublished)}, 2008.

\bibitem{iannazzo2017riemannian}
Bruno Iannazzo and Margherita Porcelli.
\newblock The riemannian barzilai--borwein method with nonmonotone line search
  and the matrix geometric mean computation.
\newblock {\em IMA Journal of Numerical Analysis}, 38(1):495--517, 2017.

\bibitem{kindermann1980markov}
Ross Kindermann.
\newblock Markov random fields and their applications.
\newblock {\em American mathematical society}, 1980.

\bibitem{korba2018structured}
Anna Korba, Alexandre Garcia, and Florence d'Alch{\'e} Buc.
\newblock A structured prediction approach for label ranking.
\newblock In {\em Advances in Neural Information Processing Systems}, pages
  8994--9004, 2018.

\bibitem{krizhevsky2012imagenet}
Alex Krizhevsky, Ilya Sutskever, and Geoffrey~E Hinton.
\newblock Imagenet classification with deep convolutional neural networks.
\newblock In {\em Advances in neural information processing systems}, pages
  1097--1105, 2012.

\bibitem{li20073d}
Hongdong Li and Richard Hartley.
\newblock The 3d-3d registration problem revisited.
\newblock In {\em Computer Vision, 2007. ICCV 2007. IEEE 11th International
  Conference on}, pages 1--8. IEEE, 2007.

\bibitem{li2015pairwise}
Xinchao Li, Martha Larson, and Alan Hanjalic.
\newblock Pairwise geometric matching for large-scale object retrieval.
\newblock In {\em Proceedings of the IEEE Conference on Computer Vision and
  Pattern Recognition}, pages 5153--5161, 2015.

\bibitem{linderman2018reparameterizing}
Scott Linderman, Gonzalo Mena, Hal Cooper, Liam Paninski, and John Cunningham.
\newblock Reparameterizing the birkhoff polytope for variational permutation
  inference.
\newblock In {\em International Conference on Artificial Intelligence and
  Statistics}, pages 1618--1627, 2018.

\bibitem{Liu2016}
Chang Liu, Jun Zhu, and Yang Song.
\newblock Stochastic gradient geodesic mcmc methods.
\newblock In {\em Advances in Neural Information Processing Systems}, pages
  3009--3017, 2016.

\bibitem{lowe2004distinctive}
David~G Lowe.
\newblock Distinctive image features from scale-invariant keypoints.
\newblock {\em International journal of computer vision}, 60(2):91--110, 2004.

\bibitem{lyzinski2016graph}
Vince Lyzinski, Donniell~E Fishkind, Marcelo Fiori, Joshua~T Vogelstein,
  Carey~E Priebe, and Guillermo Sapiro.
\newblock Graph matching: Relax at your own risk.
\newblock {\em IEEE transactions on pattern analysis and machine intelligence},
  38(1):60--73, 2016.

\bibitem{ma2015complete}
Yi-An Ma, Tianqi Chen, and Emily Fox.
\newblock A complete recipe for stochastic gradient {MCMC}.
\newblock In {\em Advances in Neural Information Processing Systems}, pages
  2917--2925, 2015.

\bibitem{maciel2003global}
Jo{\~a}o Maciel and Jo{\~a}o~Paulo Costeira.
\newblock A global solution to sparse correspondence problems.
\newblock {\em IEEE Transactions on Pattern Analysis and Machine Intelligence},
  25(2), 2003.

\bibitem{maset2017}
E. Maset, F. Arrigoni, and A. Fusiello.
\newblock Practical and efficient multi-view matching.
\newblock In {\em 2017 IEEE International Conference on Computer Vision
  (ICCV)}, Oct 2017.

\bibitem{munkres1957algorithms}
James Munkres.
\newblock Algorithms for the assignment and transportation problems.
\newblock {\em Journal of the society for industrial and applied mathematics},
  5(1):32--38, 1957.

\bibitem{murORB2}
Ra\'ul Mur-Artal and Juan~D. Tard\'os.
\newblock {ORB-SLAM2}: an open-source {SLAM} system for monocular, stereo and
  {RGB-D} cameras.
\newblock {\em IEEE Transactions on Robotics}, 33(5):1255--1262, 2017.

\bibitem{nguyen2011optimization}
Andy Nguyen, Mirela Ben-Chen, Katarzyna Welnicka, Yinyu Ye, and Leonidas
  Guibas.
\newblock An optimization approach to improving collections of shape maps.
\newblock In {\em Computer Graphics Forum}, volume~30, pages 1481--1491. Wiley
  Online Library, 2011.

\bibitem{pachauri2014permutation}
Deepti Pachauri, Risi Kondor, Gautam Sargur, and Vikas Singh.
\newblock Permutation diffusion maps (pdm) with application to the image
  association problem in computer vision.
\newblock In {\em Advances in Neural Information Processing Systems}, pages
  541--549, 2014.

\bibitem{pachauri2013solving}
Deepti Pachauri, Risi Kondor, and Vikas Singh.
\newblock Solving the multi-way matching problem by permutation
  synchronization.
\newblock In {\em Advances in neural information processing systems}, pages
  1860--1868, 2013.

\bibitem{PARK2018}
Han-Mu Park and Kuk-Jin Yoon.
\newblock Consistent multiple graph matching with multi-layer random walks
  synchronization.
\newblock {\em Pattern Recognition Letters}, 2018.

\bibitem{patterson2013stochastic}
Sam Patterson and Yee~Whye Teh.
\newblock Stochastic gradient {R}iemannian {L}angevin dynamics on the
  probability simplex.
\newblock In {\em Advances in Neural Information Processing Systems}, pages
  3102--3110, 2013.

\bibitem{plis2011directional}
Sergey Plis, Stephen McCracken, Terran Lane, and Vince Calhoun.
\newblock Directional statistics on permutations.
\newblock In {\em Proceedings of the Fourteenth International Conference on
  Artificial Intelligence and Statistics}, pages 600--608, 2011.

\bibitem{qi2010riemannian}
Chunhong Qi, Kyle~A Gallivan, and P-A Absil.
\newblock Riemannian bfgs algorithm with applications.
\newblock In {\em Recent advances in optimization and its applications in
  engineering}, pages 183--192. Springer, 2010.

\bibitem{qi2017pointnet}
Charles~R Qi, Hao Su, Kaichun Mo, and Leonidas~J Guibas.
\newblock Pointnet: Deep learning on point sets for 3d classification and
  segmentation.
\newblock In {\em Proceedings of the IEEE Conference on Computer Vision and
  Pattern Recognition}, pages 652--660, 2017.

\bibitem{ring2012optimization}
Wolfgang Ring and Benedikt Wirth.
\newblock Optimization methods on riemannian manifolds and their application to
  shape space.
\newblock {\em SIAM Journal on Optimization}, 22(2):596--627, 2012.

\bibitem{Sattler12imageretrieval}
Torsten Sattler, Tobias Weyand, Bastian Leibe, and Leif Kobbelt.
\newblock Image retrieval for image-based localization revisited, 2012.

\bibitem{schiavinato2017synchronization}
Michele Schiavinato and Andrea Torsello.
\newblock Synchronization over the birkhoff polytope for multi-graph matching.
\newblock In {\em International Workshop on Graph-Based Representations in
  Pattern Recognition}, pages 266--275. Springer, 2017.

\bibitem{schonberger2016structure}
Johannes~L Schonberger and Jan-Michael Frahm.
\newblock Structure-from-motion revisited.
\newblock In {\em Proceedings of the IEEE Conference on Computer Vision and
  Pattern Recognition}, pages 4104--4113, 2016.

\bibitem{csimvsekli2017fractional}
Umut {\c{S}}im{\c{s}}ekli.
\newblock Fractional {L}angevin {M}onte {C}arlo: Exploring {L}{\'e}vy driven
  stochastic differential equations for {M}arkov {C}hain {M}onte {C}arlo.
\newblock In {\em International Conference on Machine Learning}, pages
  3200--3209. JMLR. org, 2017.

\bibitem{simsekli2016stochastic}
Umut Simsekli, Roland Badeau, Taylan Cemgil, and Ga{\"e}l Richard.
\newblock Stochastic quasi-{N}ewton {L}angevin {M}onte {C}arlo.
\newblock In {\em International Conference on Machine Learning (ICML)}, 2016.

\bibitem{csimcsekli2018asynchronous}
Umut {\c{S}}im{\c{s}}ekli, {\c{C}}a{\u{g}}atay Y{\i}ld{\i}z, Thanh~Huy Nguyen,
  Ga{\"e}l Richard, and A~Taylan Cemgil.
\newblock Asynchronous stochastic quasi-{N}ewton {MCMC} for non-convex
  optimization.
\newblock In {\em International Conference on Machine Learning}, 2018.

\bibitem{singer2011angular}
Amit Singer.
\newblock Angular synchronization by eigenvectors and semidefinite programming.
\newblock {\em Applied and computational harmonic analysis}, 30(1):20, 2011.

\bibitem{singer2011three}
Amit Singer and Yoel Shkolnisky.
\newblock Three-dimensional structure determination from common lines in
  cryo-em by eigenvectors and semidefinite programming.
\newblock {\em SIAM journal on imaging sciences}, 4(2):543--572, 2011.

\bibitem{sinkhorn1967concerning}
Richard Sinkhorn and Paul Knopp.
\newblock Concerning nonnegative matrices and doubly stochastic matrices.
\newblock {\em Pacific Journal of Mathematics}, 21(2):343--348, 1967.

\bibitem{smith1994optimization}
Steven~T Smith.
\newblock Optimization techniques on {R}iemannian manifolds.
\newblock {\em Fields institute communications}, 3(3):113--135, 1994.

\bibitem{sun2018joint}
Yifan Sun, Zhenxiao Liang, Xiangru Huang, and Qixing Huang.
\newblock Joint map and symmetry synchronization.
\newblock In {\em Proceedings of the European Conference on Computer Vision
  (ECCV)}, pages 251--264, 2018.

\bibitem{tang2017initialization}
Da Tang and Tony Jebara.
\newblock Initialization and coordinate optimization for multi-way matching.
\newblock In {\em Artificial Intelligence and Statistics}, pages 1385--1393,
  2017.

\bibitem{thunberg2017distributed}
Johan Thunberg, Florian Bernard, and Jorge Goncalves.
\newblock Distributed methods for synchronization of orthogonal matrices over
  graphs.
\newblock {\em Automatica}, 80:243--252, 2017.

\bibitem{triggs1999bundle}
Bill Triggs, Philip~F McLauchlan, Richard~I Hartley, and Andrew~W Fitzgibbon.
\newblock Bundle adjustment—a modern synthesis.
\newblock In {\em International workshop on vision algorithms}, pages 298--372.
  Springer, 1999.

\bibitem{tripuraneni2018averaging}
Nilesh Tripuraneni, Nicolas Flammarion, Francis Bach, and Michael~I Jordan.
\newblock Averaging stochastic gradient descent on {R}iemannian manifolds.
\newblock In {\em Conference on Learning Theory (COLT)}, 2018.

\bibitem{tron2014distributed}
Roberto Tron and Rene Vidal.
\newblock Distributed 3-d localization of camera sensor networks from 2-d image
  measurements.
\newblock {\em IEEE Transactions on Automatic Control}, 59(12):3325--3340,
  2014.

\bibitem{wang2013exact}
Lanhui Wang and Amit Singer.
\newblock Exact and stable recovery of rotations for robust synchronization.
\newblock {\em Information and Inference: A Journal of the IMA}, 2(2):145--193,
  2013.

\bibitem{wang2018multi}
Qianqian Wang, Xiaowei Zhou, and Kostas Daniilidis.
\newblock Multi-image semantic matching by mining consistent features.
\newblock In {\em CVPR}, 2018.

\bibitem{wolfe1969convergence}
Philip Wolfe.
\newblock Convergence conditions for ascent methods.
\newblock {\em SIAM review}, 11(2):226--235, 1969.

\bibitem{wolfe1971convergence}
Philip Wolfe.
\newblock Convergence conditions for ascent methods. ii: Some corrections.
\newblock {\em SIAM review}, 13(2):185--188, 1971.

\bibitem{xifara2014langevin}
Tatiana Xifara, Chris Sherlock, Samuel Livingstone, Simon Byrne, and Mark
  Girolami.
\newblock Langevin diffusions and the {M}etropolis-adjusted {L}angevin
  algorithm.
\newblock {\em Statistics \& Probability Letters}, 91:14--19, 2014.

\bibitem{yan2016multi}
Junchi Yan, Minsu Cho, Hongyuan Zha, Xiaokang Yang, and Stephen~M Chu.
\newblock Multi-graph matching via affinity optimization with graduated
  consistency regularization.
\newblock {\em IEEE transactions on pattern analysis and machine intelligence},
  38(6):1228--1242, 2016.

\bibitem{yan2016short}
Junchi Yan, Xu-Cheng Yin, Weiyao Lin, Cheng Deng, Hongyuan Zha, and Xiaokang
  Yang.
\newblock A short survey of recent advances in graph matching.
\newblock In {\em Proceedings of the 2016 ACM on International Conference on
  Multimedia Retrieval}, pages 167--174. ACM, 2016.

\bibitem{yu2016globally}
Jin-Gang Yu, Gui-Song Xia, Ashok Samal, and Jinwen Tian.
\newblock Globally consistent correspondence of multiple feature sets using
  proximal gauss--seidel relaxation.
\newblock {\em Pattern Recognition}, 51:255--267, 2016.

\bibitem{yuan2016riemannian}
Xinru Yuan, Wen Huang, P-A Absil, and Kyle~A Gallivan.
\newblock A riemannian limited-memory {BFGS} algorithm for computing the matrix
  geometric mean.
\newblock {\em Procedia Computer Science}, 80:2147--2157, 2016.

\bibitem{Zanfir_2018_CVPR}
Andrei Zanfir and Cristian Sminchisescu.
\newblock Deep learning of graph matching.
\newblock In {\em The IEEE Conference on Computer Vision and Pattern
  Recognition (CVPR)}, June 2018.

\bibitem{zaslavskiy2009path}
Mikhail Zaslavskiy, Francis Bach, and Jean-Philippe Vert.
\newblock A path following algorithm for the graph matching problem.
\newblock {\em IEEE Transactions on Pattern Analysis and Machine Intelligence},
  31(12):2227--2242, 2009.

\bibitem{zhou2015multi}
Xiaowei Zhou, Menglong Zhu, and Kostas Daniilidis.
\newblock Multi-image matching via fast alternating minimization.
\newblock In {\em Proceedings of the IEEE International Conference on Computer
  Vision}, pages 4032--4040, 2015.

\end{thebibliography}
}

\onecolumn
\setcounter{section}{0}
\renewcommand\thesection{\Alph{section}}
\newcommand{\suppsection}{\subsection}
\clearpage
\begin{center}
\textbf{\large Probabilistic Permutation Synchronization using the Riemannian Structure of the Birkhoff Polytope - Supplementary Material}
\end{center}
\makeatletter

This part supplements our main paper by providing further algorithmic details on RL-BFGS, proofs of the propositions presented in the paper, derivations of the \textit{retraction Euler integrator} and additional experiments.

\section{Riemannian Limited Memory BFGS}
We now explain the R-LBFGS optimizer used in our work. To begin with, we recall the foundational BFGS~\cite{fletcher2013practical}, that is a quasi Newton method operating in the Euclidean space. We then review the simple Riemannian descent algorithms that employ line-search. Finally, we explain the R-BFGS used to solve our synchronization problem. R-BFGS can be modified slightly to arrive at the desired limited memory Riemannian BFGS solver.

\subsection{Euclidean BFGS}
The idea is to approximate the true hessian $\mathbf{H}$ with ${\bm{\B}}$, using updates specified by the gradient evaluations\footnote{Note that certain implementations can instead opt to approximate the inverse Hessian for computational reasons.}. We will then transition from this Euclidean space line search method to Riemannian space optimizers. For clarity, in Alg. 1 we summarize the Euclidean BFGS algorithm, that computes ${\bm{\B}}$ by using the most recent values of the past iterates. Note that many strategies exist to initialize ${\bm{\B}}_0$, while a common choice is the scaled identity matrix ${\bm{\B}}_0=\gamma\mathbf{I}$. Eq.~\ref{eq:bfgs} corresponds to the particular BFGS-update rule. In the limited-memory successor of BFGS, the L-BFGS, the Hessian matrix $\mathbf{H}$ is instead approximated up to a pre-specified rank in order to achieve linear time and space-complexity in the dimension of the problem. 

\begin{algorithm2e} [h!]
\DontPrintSemicolon
\SetKwInOut{Input}{input}\Input{A real-valued, differentiable potential energy $U$, initial iterate $\X_0$ and initial Hessian approximation ${\bm{\B}}_0$.}
$k \gets 0$\\
 \While{$\x_{k}$ {does not sufficiently minimize} $f$}{
   Compute the direction $\bm{\eta}_k$ by solving ${\bm{\B}}_k\bm{\eta}_k=-\nabla U(\x_k)$ for $\bm{\eta}_k$.\\
   Define the new iterate $\x_{k+1}\gets \x_k+\bm{\eta}_k$.\\
   Set $\s_k \gets \x_{k+1}-\x_{k}$ and $\y_k \gets \nabla U(\x_{k+1}) - \nabla U(\x_{k})$.\\
   Compute the new Hessian approximation:
   \begin{equation}
   \label{eq:bfgs}
   {\bm{\B}}_{k+1} = {\bm{\B}}_k + \frac{\y_k\y_k^\top}{\y_k^\top\s_k} - \frac{{\bm{\B}}_k\s_k\s_k^\top\bm{\B}_k}{\s_k^\top {\bm{\B}}_k \s_k}.
   \end{equation}
   $k\gets k+1$.
   }
\caption{Euclidean BFGS}
\label{algo:bfgs}
\end{algorithm2e}
 
\subsection{Riemannian Descent and Line Search}
\label{sec:rlnsearch}
The standard descent minimizers can be extended to operate on Riemannian manifolds $\Man$ using the geometry of the parameters. A typical Riemannian update can be characterized as:
\begin{align}
\x_{k+1} = R_{\x_k}(\tau_k \bm{\eta}_k)
\end{align}
where $R$ is the retraction operator, i.e. the smooth map from the tangent bundle $\T\Man$ to $\Man$ and $R_{\x_k}$ is the restriction of $R$ to $\T_{\x_k}$, the tangent space of the current iterate $\x_k$, $R_{\x_k}:\T_{\x_k}\rightarrow \Man$. The descent direction is defined to be on the tangent space $\bm{\eta}_k\in \T_{\x_k}\Man$. When manifolds are rather simple shapes, the size of the step $\tau_k$ can be a fixed value. However, for most matrix manifolds, some form of a line search is preferred to compute $\tau_k$. The retraction operator is used to take steps on the manifold and is usually derived analytically. When this analytic map is length-preserving, it is called \textit{true} exponential map. However, such exactness is not a requirement for the optimizer and as it happens in the case of doubly stochastic matrices, $R_{\x_k}$ only needs to be an approximate \textit{retraction}, e.g. first or second order. In fact, for a map to be valid retraction, it is sufficient to satisfy the following conditions:

\begin{wrapfigure}[1]{r}{0.27\textwidth}
    \vspace{-80pt}
    \centering
    \includegraphics[height=0.2\textwidth]{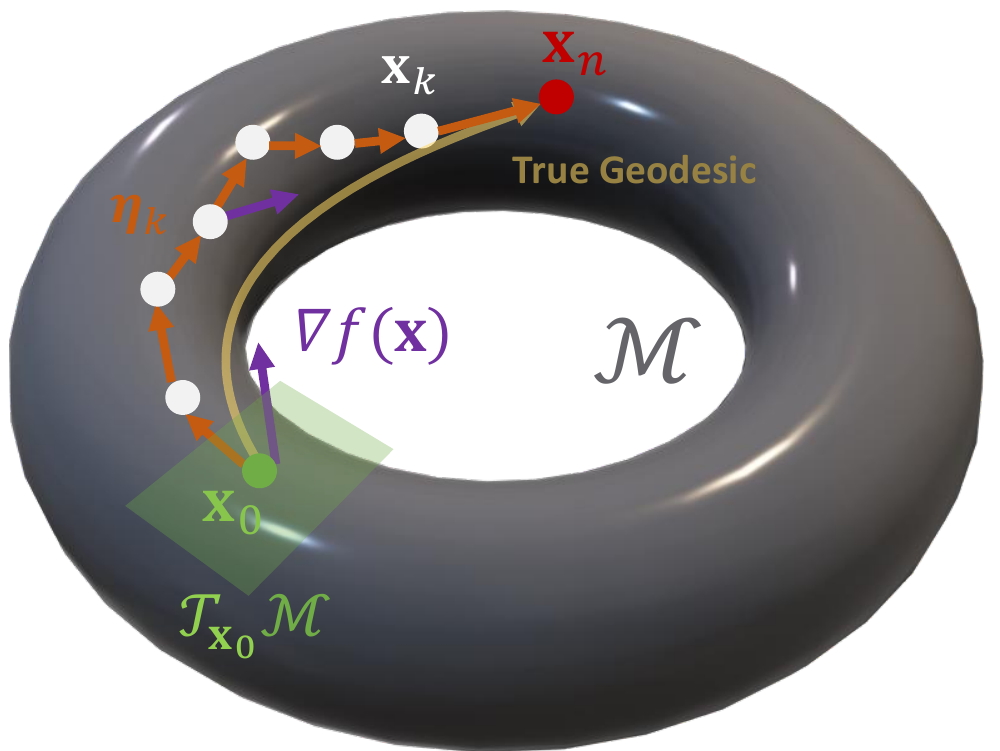}
    \caption{\small Visualization of the entities used on a sample toroidal manifold.}
    \label{fig:torus}
\end{wrapfigure} 

\begin{minipage}{.65\textwidth}
\begin{enumerate}\itemsep0em
\item $R$ is continuously differentiable.
\item $R_{\x}(\zero_{\x})=\x$, where $\zero_{\x}$ denotes the zero element of $\T_{\x}\Man$. This is called the \textit{centering} property.
\item The curve $\gamma_{\bm{\eta}_{\x}}(\tau)=R_{\x}(\tau \bm{\eta}_{\x})$ satisfies:
\begin{align}
\frac{d\gamma_{\bm{\eta}_{\x}}(\tau)}{d\tau}\Bigr\rvert_{\tau = 0}=\bm{\eta}_{\x}, \, \forall \bm{\eta}_{\x} \in \T_{\x}\Man.
\end{align}
This is called the \textit{local rigidity} property.
\end{enumerate}
\vspace{12pt}
\end{minipage}

Considering all these, we provide, in Alg. 2, a general Riemannian descent algorithm, which can be customized by the choice of the direction, retraction and the means to compute the step size. Such a concept of minimizing by walking on the manifold is visualized in Fig.~\ref{fig:torus}.

 \begin{algorithm2e} [h!]
 \DontPrintSemicolon
 \SetKwInOut{Input}{input}
 \Input{A Riemannian manifold $\Man$, a retraction operator $R$ and initial iterate $\x_k\in\Man$ where $k=0$.}
 \While{$\x_{k}$ \text{does not sufficiently minimize} $f$}{
    Pick a gradient related descent direction $\bm{\eta}_k\in\T_{\x_k}\Man$.\\
    Choose a retraction $R_{\x_k}:\T_{\x_k}\Man \rightarrow \Man$.\\
    Choose a step length $\tau_k\in \mathbb{R}$.\\
    Set $\x_{k+1}\gets R_{\x_k}(\tau_k\bm{\eta}_k)$.\\
	$k\gets k+1$.
    }
 \caption{General Riemannian Line Search Minimizer}
 \label{algo:linesearch}
 \end{algorithm2e}

It is possible to develop new minimizers by making particular choices for the Riemannian operators in Alg 2. We now review the Armijo variant of the Riemannian gradient descent~\cite{hosseini2018line}, that is a common and probably the simplest choice for an accelerated optimizer on the manifold. Though, many other line-search conditions such as Barzilai-Borwein~\cite{iannazzo2017riemannian} or strong Wolfe~\cite{ring2012optimization} can be used. The pseudocode for this backtracking version is given in Alg. 3. Note that the Riemannian gradient $\text{grad}f(\x)$ is simply obtained by projecting the Euclidean gradient $\nabla f(\x)$ onto the manifold $\Man$ and the next iterate is obtained through a line search so as to satisfy the Armijo condition~\cite{armijo1966minimization}, tweaked to use the retraction $R$ for taking steps. For some predefined Armijo step size, this algorithm
is guaranteed to converge for all retractions~\cite{absil2009optimization}. 
 
 \begin{algorithm2e} [h!]
 \DontPrintSemicolon
 \SetKwInOut{Input}{input}
 \Input{A Riemannian manifold $\Man$, a retraction operator $R$, the projection operator onto the tangent space $\Pi_{\x_k}:\mathbb{R}^n\rightarrow\T_{\x_k}\Man$, a real-valued, differentiable potential energy $f$, initial iterate $\x_0\in\Man$ and the Armijo line search scalars including $c$.}
 \While{$\x_{k}$ \text{does not sufficiently minimize} $f$}{
 {\color{purple} \small \tcp{Euclidean gradient to Riemannian direction}}
    $\bm{\eta}_k \gets -\text{grad}f(\x_k) \triangleq \Pi_{\x_k} (-\nabla f(\x_k))$.\\
    Select $\x_{k+1}$ such that:
	\begin{equation}    
    f(\x_k)-f(\x_{k+1})\geq c\big( f(\x_k) - f(R_\x (\tau_k \bm{\eta}_k))\big),
	\end{equation}\\
	where $\tau_k$ is the Armijo step size.\\
	$k\gets k+1$.
    }
 \caption{General Riemannian Steepest Descent with Armijo Line Search}
 \label{algo:armijo}
 \end{algorithm2e}

\subsection{LR-BFGS for Minimization on $\DP_n$}
Based upon the ideas developed up to this point, we present the Riemannian variant of the L-BFGS. The algorithm is mainly based on Huang~\etal and we refer the reader to their seminal work for more details~\cite{huang2015broyden}. It is also worth noting~\cite{qi2010riemannian}. Similar to the Euclidean case, we begin by summarizing a BFGS algorithm with the difference that it is suited to solving the synchronization task. This time, $\bm{\B}$ will approximate the action of the Hessian on the tangent space $\T_{\x_k}\Man$. Generalizing any (quasi-)Newton method to the Riemannian setting thus requires computing the Riemannian Hessian operator, or its approximation. This necessitates taking a some sort of a directional derivative of a vector field. As the vector fields belonging to different tangent spaces cannot be compared immediately, one needs the notion of connection $\Gamma$ generalizing the directional derivative of a vector field. This connection is closely related to the concept of vector transport $T: \T\Man \otimes \T\Man \rightarrow \T \Man$ which allows moving from one tangent space to the other $T_{\bm{\eta}}(\bm{\zeta}): (\bm{\eta},\bm{\zeta})\in\T_\x\Man \rightarrow \T_{R_{\x}(\bm{\eta})}\Man \,$. For the first order treatment of the Birkhoff Polytope, this vector transport takes a simple form:
\begin{equation}
T_{\bm{\eta}}(\zeta) \in \T_{R_{\x}(\bm{\eta})}\DP_n \triangleq \Pi_{R_{\x}(\bm{\eta})}(\bm{\zeta}).
\end{equation}
where $\Pi_{\X}(\cdot)$ is the projection onto the tangent space of $\X$ as defined in the main paper. We give the pseudocode of the R-BFGS algorithm in Alg. 4 below.
To ensure Riemannian L-BFGS always produces a descent direction, it is necessary to adapt a line-search algorithm which satisfies strong Wolfe (SW) conditions~\cite{wolfe1969convergence,wolfe1971convergence}. Roptlib~\cite{HAG2016} does implement the SW while ManOpt~\cite{manopt} uses the simpler Armijo conditions, even for the LR-BFGS. Another simple possibility is to take the steps on the Manifold using the retraction, while performing the line search in the Euclidean space. This results in a projected, approximate line search, but can terminate quite quickly, making it possible to exploit existing Euclidean space SW LBFGS solvers such as Ceres~\cite{ceres-solver}. Without delving into rigorous proofs, we provide one such implementation at {\url{github.com/tolgabirdal/MatrixManifoldsInCeres}} where several matrix manifolds are considered.
We leave the analysis of such approximations for a future study and note that the results in the paper are generated by limited memory form of the R-BFGS procedure summarized under Alg. 4. While many modifications do exist, LR-BFGS, in essence, is an approximation to R-BFGS, where $O(MN^2)$ storage of the full Hessian is avoided by unrolling the RBFGS descent computation and using the last $m$ input and gradient differences where $m<<MN^2$. This allows us to handle large data regimes.

 \begin{algorithm2e} [h!]
 \DontPrintSemicolon
 \SetKwInOut{Input}{input}
 \Input{Birkhoff Polytope $\DP$ with Riemannian (Fisher information) metric $g$, first order retraction $R$, the parallel transport $T$, a real-valued, differentiable potential energy function of synchronization $U$, initial iterate $\X_0$ and initial Hessian approximation ${\bm{\B}}_0$.}
 \While{$\x_{k}$ {does not sufficiently minimize} $f$}{
 	Compute the Euclidean gradients using:
 	\begin{align}
 	\nabla_{\X_i} U(\X) &= \sum\limits_{(i,j)\in E} -2 (\Pm_{ij}-\X_i\X_j^\top)\X_j \qquad
 	 \nabla_{\X_j} U(\X) = \sum\limits_{(i,j)\in E} -2 (\Pm_{ij}-\X_i\X_j^\top)\X_i
 	\end{align}\\
 	Compute the Riemannian gradient\,\, $\text{grad }U(\X_k) \gets \triangleq \Pi_{\X_k} (-\nabla U(\X_k))$.\\
    Compute the direction $\bm{\eta}_k\in \T_{\X_k}\DP_n$ by solving ${\bm{\B}}_k\bm{\eta}_k=-\text{grad }U(\X_k)$.\\
    Given $\bm{\eta}_k$, execute a line search satisfying the strong Wolfe conditions~\cite{wolfe1969convergence,wolfe1971convergence,huang2015broyden}. Set step size $\tau_k$. \\
    Set $\x_{k+1}=R_{\x_k}(\tau_k\bm{\eta}_k)$.\\
    Use vector transport to define:
    \begin{align}
    \sing_k = T_{\tau_k\bm{\eta}_k}(\tau_k\bm{\eta}_k)\,,\qquad
    \Y_k = \text{grad }U(\X_{k+1})-T_{\tau_k\bm{\eta}_k}(\text{grad }U(\x_k)).
    \end{align}\\
	Compute $\tilde{\bm{\B}_k}=T_{\tau_k\bm{\eta}_k}\circ \bm{\B}_k \circ T_{\tau_k \bm{\eta}_k}^{+}$ where $T^{+}$ is denotes (pseudo-)inverse of the transport.\\
    Compute the linear operator $\bm{\B}_{k+1}:\T_{\x_{k+1}}\Man\rightarrow \T_{\x_{k+1}}\Man$:
   \begin{equation}
   \label{eq:rbfgs}
   \tilde{{\bm{\B}}}_{k+1}\Z = {\tilde{\bm{\B}}}_k\Z + \frac{g(\Y_k,\Z)}{g(\y_k,\sing_k)}\Y_k - \frac{g(\sing_k,\tilde{\bm{\B}}\Z)}{g(\sing_k, \tilde{\bm{\B}}\sing_k)} \qquad \forall \Z \in \T_{\x_{k+1}}\DP_n.
   \end{equation}\\
   $k\gets k+1$.
 }
 \caption{Riemannian BFGS for Synchronization on $\DP_n$}
 \label{algo:lbfgs}
 \end{algorithm2e}
\section{Proof of Proposition 1 and Further Analysis}
\begin{proof}
Consider the ray cast outwards from the origin: $\mathbf{r} = \mathbb{R}_{\geq 0} \mathbbm{1}$. $\mathbf{r}$ exits $\DP_n$ at $\tfrac{1}{n} \mathbbm{1}$ but exits the sphere $\Scal^{n-1}$ at $\tfrac{1}{\sqrt{n}} \mathbbm{1}$. This creates a gap between the two manifolds whose ratio grows to $\infty$ as $n$ grows. For a perspective of optimization, consider the linear function $\lambda(\Pm) = \sum_{i,j} P_{ij}$. $\lambda(\Pm)$ is minimized on $\DP_n$ at $\tfrac{1}{n} \mathbbm{1}$ and on the sphere at $\tfrac{1}{\sqrt{n}} \mathbbm{1}$,
\end{proof}

We now look at the restricted orthogonal matrices and seek to find the difference between optimizing a linear functional on them and $\DP_n$. Note that minimizing functionals is ultimately what we are interested in as the problems we consider here are formulated as optimization. Let $\mathcal{A}$ be the affine linear space of $(n-1)\times(n-1)$ matrices whose rows and columns sum to 1, \ie doubly stochastic but with no condition on the signs or \textit{a generalized doubly stochastic matrix}. The Birkhoff Polytope $\DP_n$ is contained in $\mathcal{A}$, whereas the orthogonal group $\Og_n$ is not. So, there exists an affine functional $\lambda$ that is minimized at a point on $\DP_n$, $\lambda=0$ but not on $\Og$. Let $\mathcal{AO}_n=\DP_n \cap \Og_n$, a further restricted manifold. This time unlike the case of $\DP_n$, $\mathcal{AO}_n$ would not coincide the permutations $\mathcal{P}$ due to the negative elements. In fact $\mathcal{P}\subset\mathcal{AO}_n$.
\setcounter{prop}{2}
\begin{prop}
The ratio between the time a ray leaves $\DP_n$ and the same ray leaves $\mathcal{AO}_n$ can be as large as $n-1$.
\end{prop}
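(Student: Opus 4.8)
The plan is to exhibit one explicit ray, issued from the center $\mathbf{C}_n=\tfrac1n\one_n\one_n^\top$ of $\DP_n$ and running inside the affine hull $\mathcal{A}$ of $\DP_n$ (rays from $\mathbf{C}_n$ are the natural choice, since a ray from the origin intersects the ``row/column sums $=1$'' affine space in a single point). Concretely I would take $\X(t):=(1+t)\mathbf{C}_n-t\,\Id$ for $t\ge 0$; every $\X(t)$ has all row and column sums equal to $1$, so $\X(t)\in\mathcal{A}$, and $\X(0)=\mathbf{C}_n\in\DP_n$ while $\X(1)=\tfrac2n\one_n\one_n^\top-\Id=:\M$. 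The whole point of the construction is that $\M$ — the negated Householder reflection across $\one_n^\perp$, fixing $\one_n$ and acting as $-\Id$ on $\one_n^\perp$ — is orthogonal with all row/column sums $1$, hence $\M\in\mathcal{AO}_n$; yet for $n\ge 3$ its diagonal entries $\tfrac{2-n}{n}$ are negative, so $\M\notin\mathcal{P}_n$, and it sits \emph{as deep into the negative orthant as an element of} $\mathcal{AO}_n$ \emph{possibly can}.

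Then I would just read off the two exit times. The off-diagonal entries of $\X(t)$ equal $\tfrac{1+t}{n}>0$ and each diagonal entry equals $\tfrac{1-(n-1)t}{n}$, so $\X(t)\in\DP_n$ exactly for $0\le t\le\tfrac1{n-1}$, i.e.\ the ray leaves $\DP_n$ at $t_{\DP}=\tfrac1{n-1}$. For $\mathcal{AO}_n$, since $\mathbf{C}_n$ is symmetric and idempotent one gets that $\X(t)$ is symmetric with $\X(t)^2=(1-t^2)\mathbf{C}_n+t^2\Id$ — equivalently $\X(t)$ has eigenvalue $1$ on $\one_n$ and $-t$ with multiplicity $n-1$ on $\one_n^\perp$ — so $\X(t)$ is orthogonal iff $t=1$. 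Hence $t_{\mathcal{AO}}=1$ and the ratio equals $t_{\mathcal{AO}}/t_{\DP}=n-1$.

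To make ``as large as $n-1$'' sharp I would also verify that no ray does better. Any ray through $\mathbf{C}_n$ that meets $\mathcal{AO}_n$ can, after rescaling its direction, be written $\X(t)=\mathbf{C}_n+t(\M'-\mathbf{C}_n)$ with $\M'\in\mathcal{AO}_n$; using $\M'\mathbf{C}_n=\mathbf{C}_n$ one computes $\X(t)\X(t)^\top=\mathbf{C}_n+t^2(\Id-\mathbf{C}_n)$, so again $t=1$ is the only orthogonal point, while the nonnegativity constraints give $t_{\DP}=(1-n\min_{i,j}M'_{ij})^{-1}$. An entry $x$ in a row of $\M'$, together with the remaining $n-1$ entries of that row (which sum to $1-x$ and have squared sum $1-x^2$, the rows of $\M'$ being unit vectors), satisfies $(1-x)^2\le(n-1)(1-x^2)$ by Cauchy--Schwarz, whence $x\ge\tfrac{2-n}{n}$; therefore $1-n\min_{i,j}M'_{ij}\le n-1$, so $t_{\mathcal{AO}}/t_{\DP}=t_{\DP}^{-1}\le n-1$, with equality precisely at $\M'=\M$.

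The hard part here is not technical but combinatorial guesswork: spotting, in Step 1, that $\M=\tfrac2n\one_n\one_n^\top-\Id$ is the extreme point of $\mathcal{AO}_n$ in the direction that escapes $\DP_n$ fastest — and the Cauchy--Schwarz estimate of the last paragraph is exactly the certificate of its extremality. Once $\M$ is on the table, the two exit-time computations are one-line verifications, and for $n=2$ the construction simply degenerates into $\M\in\mathcal{P}_2$, giving the (trivial) ratio $1=n-1$.
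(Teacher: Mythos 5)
Your proof is correct and takes essentially the same route as the paper: you use the very same ray $l(t)=(1+t)\mathbf{C}_n-t\,\Id$, read off the exit from $\DP_n$ at $t=\tfrac1{n-1}$ from the diagonal entries, and identify $t=1$ as the point where the ray meets the orthogonal, row/column-sum-one matrix $\tfrac2n\one_n\one_n^\top-\Id$, giving the ratio $n-1$. Your closing Cauchy--Schwarz argument showing that no ray from $\mathbf{C}_n$ can do better is a genuine (and correct, up to the inessential uniqueness remark) strengthening not present in the paper, but it is not needed for the stated ``can be as large as $n-1$''.
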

\begin{proof}
Consider the line through $\frac{1}{n}\mathbbm{1}$ and $\Id$ : $l(x) = \frac{1+x}{n}\mathbbm{1} - x\Id$. Such a ray leaves $\DP_n$ at $x=\frac{1}{n-1}$ and for all $x\in [-1,1]$ is in the convex hull of $\mathcal{AO}_n$. When $x=1$, it is an orthogonal matrix, \ie on $\Og_n$. Hence, the ray can be on $\mathcal{AO}_n$ for $n-1$ times as long as in $\DP_n$. This quantity also grows to infinity as $n\rightarrow \infty$. If same analysis is done for the case of the sphere, the ratio is found to be $(n-1)^{3/2}$, grows quicker to infinity and is a larger quantity.
\end{proof}

\section{Proof of Proposition 2}

\begin{proof}
The conclusion of the proposition states that $p(\X)$ and $p(\Pm|\X)$ have the following form:
\begin{align}
    p(\X) =& \frac1{C} \exp \Bigl(-\beta  \sum\limits_{(i,j)\in \Edge}\|\X_{ij}\|_{\mathrm{F}}^2 \Bigr)  \prod\limits_{(i,j) \in \Edge}  Z_{ij} \\
p(\Pm | \X) =&  \prod_{(i,j)\in \Edge} p(\Pm_{ij} | \X_i, \X_j) \\
=& \prod_{(i,j)\in \Edge} \exp\Bigl( 2\beta\>\mathrm{tr}(\Pm_{ij}^\top \X_{ij}) \Bigr)  \frac1{Z_{ij}} \\
=& \exp\Bigl( \beta \sum_{(i,j)\in \Edge} 2 \mathrm{tr}(\Pm_{ij}^\top \X_{ij}) \Bigr) \prod_{(i,j)\in \Edge}\frac1{Z_{ij}}
\end{align}
where 
\begin{align}
    Z_{ij} :=  \prod\limits_{b=1}^{B_{ij}} Z_\X(\beta, \theta_{ij,b}). \label{eqn:normconst}
\end{align}
Our goal is to verify that the following holds with the above definitions:
\begin{align}
    p(\Pm,\X) = \frac1{Z} \exp\Bigl(-\beta \sum_{(i,j)\in \Edge} \|\Pm_{ij}- \X_i \X_j^\top \|_{\mathrm{F}}^2 \Bigr) = \frac1{Z} \prod_{(i,j)\in \Edge} \psi(\Pm_{ij},\X_i,\X_j),
\end{align}
where $Z$ is a positive constant (cf.\ Section 4 in the main paper). Then, we can easily verify this equality as follows: 
\begin{align}
    p(\Pm,\X) &= p(\Pm|\X) p(\X) \\
    &=  \frac1{C} \exp \Bigl(-\beta  \sum\limits_{(i,j)\in \Edge}\|\X_{ij}\|_{\mathrm{F}}^2 \Bigr)   \exp\Bigl( \beta \sum_{(i,j)\in \Edge} 2 \mathrm{tr}(\Pm_{ij}^\top \X_{ij}) \Bigr) \\
    &=  \frac1{C} \exp \Bigl(-\beta   \sum\limits_{(i,j)\in \Edge} \bigl(\|\X_{ij}\|_{\mathrm{F}}^2 - 2 \mathrm{tr}(\Pm_{ij}^\top \X_{ij}) \bigr) \Bigr)  \\
    &= \frac1{C} \exp \Bigl(-\beta   \sum\limits_{(i,j)\in \Edge}\bigl(\|\X_{ij}\|_{\mathrm{F}}^2 - 2 \mathrm{tr}(\Pm_{ij}^\top \X_{ij}) - n + n \bigr) \Bigr) \\
    &= \frac1{C} \exp(\beta n |\Edge|) \exp \Bigl(-\beta  \sum\limits_{(i,j)\in \Edge}\bigl( \|\X_{ij}\|_{\mathrm{F}}^2 - 2 \mathrm{tr}(\Pm_{ij}^\top \X_{ij}) + \|\Pm_{ij}\|_{\mathrm{F}}^2 \bigr) \Bigr) \label{eqn:interm} \\
    &= \frac1{Z}  \exp \Bigl(-\beta  \sum\limits_{(i,j)\in \Edge} \|\Pm_{ij}- \X_i \X_j^\top \|_{\mathrm{F}}^2 \Bigr)
\end{align}
where we used the fact that $\|\Pm_{ij}\|_{\mathrm{F}}^2 = n$ in Equation~\ref{eqn:interm} since $\Pm_{ij} \in \mathcal{P}_n$ and $Z = C \exp(-\beta n |\Edge|)$. 

\vspace{5pt}

In the rest of the proof, we will characterize the normalizing constant $Z_{ij} = \int_{\mathcal{P}_n} \exp(2\beta \mathrm{tr}(\Pm_{ij}^\top \X_{ij}) \mathrm{d} \Pm_{ij}$. Here, we denote the the counting measure on $\mathcal{P}_n$ as $\mathrm{d} \Pm_{ij}$.

\vspace{5pt} 

We start by decomposing $\X_{ij}$ via Birkhoff-von Neumann theorem:
\begin{align}
\X_{ij} = \sum\limits_{b=1}^{B_{ij}} \theta_{ij,b} \mathbf{M}_{ij,b}, \quad \sum\limits_{b=1}^{B_{ij}} \theta_{ij,b} =1, \quad B_{ij} \in \mathbb{Z}_+, \quad \theta_{ij,b} \geq 0, \> \mathbf{M}_{ij,b} \in \mathcal{P}_n, \quad \forall b = 1,\dots,B_{ij}.
\end{align}
Then, we have:
\begin{align}
    Z_{ij} &= \int_{\mathcal{P}_n} \exp\Bigl(2\beta \mathrm{tr}(\Pm_{ij}^\top \X_{ij} \Bigr) \mathrm{d} \Pm_{ij} \\
    &= \int_{\mathcal{P}_n} \exp\Bigl(2\beta \mathrm{tr}(\Pm_{ij}^\top \sum\limits_{b=1}^{B_{ij}} \theta_{ij,b} \mathbf{M}_{ij,b}  \Bigr) \mathrm{d} \Pm_{ij} \\
    &= \int_{\mathcal{P}_n} \exp\Bigl(2\beta \sum_{b=1}^{B_{ij}} \theta_{ij,b}  \mathrm{tr}(\Pm_{ij}^\top \mathbf{M}_{ij,b}) \Bigr) \mathrm{d} \Pm_{ij} \\
    & \geq \exp\Bigl( \int_{\mathcal{P}_n} 2\beta \sum_{b=1}^{B_{ij}} \theta_{ij,b}  \mathrm{tr}(\Pm_{ij}^\top \mathbf{M}_{ij,b})  \mathrm{d} \Pm_{ij} \Bigr) \label{eqn:jensen} \\
    & = \exp\Bigl(  \sum_{b=1}^{B_{ij}} \int_{\mathcal{P}_n} 2\beta  \theta_{ij,b}  \mathrm{tr}(\Pm_{ij}^\top \mathbf{M}_{ij,b})  \mathrm{d} \Pm_{ij} \Bigr)
\end{align}
where we used Jensen's inequality in \eqref{eqn:jensen}. Here, we observe that $\int_{\mathcal{P}_n} 2\beta  \theta_{ij,b}  \mathrm{tr}(\Pm_{ij}^\top \mathbf{M}_{ij,b})\mathrm{d} \Pm_{ij}$ is similar to the normalization constant of a Mallows model \cite{clemenccon2010kantorovich}, and it only depends on $\beta$ and $\theta_{ij,b} $. Hence, we conclude that 
\begin{align}
    Z_{ij} & \geq \prod_{b=1}^{B_{ij}} \exp \Bigl( \int_{\mathcal{P}_n} 2\beta  \theta_{ij,b}  \mathrm{tr}(\Pm_{ij}^\top \mathbf{M}_{ij,b})  \mathrm{d} \Pm_{ij} \Bigr) \\
    &:= \prod_{b=1}^{B_{ij}}  f(\beta, \theta_{ij,b})
\end{align}
where $f$ is an increasing function of $\beta, \theta_{ij,b}$ since $\mathrm{tr}(\Pm_{ij}^\top \mathbf{M}_{ij,b}) \geq 0$. This completes the proof.
\end{proof}

\section{Derivation of the Retraction Euler Integrator}
We start by recalling the SDE
\begin{align}
\mathrm{d} \xe_t = (-\mathbf{G}^{-1} \nabla_{\xe}  U_\lambda(\xe_t) + \boldsymbol{\Gamma}_t) \mathrm{d}t + \sqrt{2/\beta \mathbf{G}^{-1}} \mathrm{d} \mathrm{B}_t. \label{eqn:sdesupp}
\end{align}
By \cite{xifara2014langevin}, we know that 
\begin{align}
   \boldsymbol{\Gamma}_t = \frac1{2} \mathbf{G}^{-1} \nabla_{\xe} \log |\mathbf{G}| .
\end{align}
By using this identity in Equation~\ref{eqn:sdesupp}, we obtain:
\begin{align}
    \mathrm{d} \xe_t = -\mathbf{G}^{-1} (\nabla_{\xe}U_\lambda(\xe_t)  + \frac1{2}\nabla_{\xe} \log |\mathbf{G}|) \mathrm{d}t + \sqrt{2/\beta \mathbf{G}^{-1}} \mathrm{d} \mathrm{B}_t.
\end{align}
By using a similar notation to \cite{Liu2016}, we rewrite the above equation as follows:
\begin{align}
     \mathrm{d} \xe_t = -\mathbf{G}^{-1} (\nabla_{\xe}U_\lambda(\xe_t)  + \frac1{2}\nabla_{\xe} \log |\mathbf{G}|) \mathrm{d}t +  \mathbf{G}^{-1} \mathbf{M}^\top  \mathcal{N}(0, 2 \beta \mathbf{I})
\end{align}
where $\mathcal{N}$ denotes the Gaussian distribution and $[\mathbf{M}]_{ij} = \partial [\X]_{ij} / \partial[\xe]_{ij}$. We multiply each side of the above equation by $\mathbf{M}$ and use the property $\nabla_{\xe} = \mathbf{M}^\top \nabla_{\X}$ \cite{Liu2016}, which yields:
\begin{align}
     \mathrm{d} \X_t &= -\mathbf{M}\mathbf{G}^{-1}\mathbf{M}^\top \nabla_{\X}U(\X_t) \mathrm{d}t +  \mathbf{M} \mathbf{G}^{-1} \mathbf{M}^\top  \mathcal{N}(0, 2 \beta \mathbf{I}) \\
     &= \mathbf{M}(\mathbf{M}^\top\mathbf{M})^{-1}\mathbf{M}^\top \Bigl(- \nabla_{\X}U(\X_t) \mathrm{d}t + \mathcal{N}(0, 2 \beta \mathbf{I}) \Bigr).
\end{align}
Here we used the area formula (Equation 11 in the main paper) and the fact that $\mathbf{G} = \mathbf{M}^\top \mathbf{M}$.

\vspace{5pt}

The term $\mathbf{M}(\mathbf{M}^\top\mathbf{M})^{-1}\mathbf{M}^\top$ turns out the be the projection operator to the tangent space of $\X$ \cite{byrne2013}. Therefore, the usual geodesic integrator would consist of the following steps at iteration $k$:
\begin{itemize}
    \item Set a small step size $h$
    \item Compute the term $- h\nabla_{\X}U(\X_t)  + \sqrt{2h/\beta} \mathbf{Z}$, with $\mathbf{Z} \sim \mathcal{N}(0, \mathbf{I})$
    \item Obtain the `direction' by projecting the result of the previous step on the tangent space 
    \item Move the current iterate on the geodesic determined by the direction that was obtained in the previous step.
\end{itemize}

Unfortunately, the last step of the integrator above cannot be computed in the case of $\DP_n$. Therefore, we replace it with moving the current iterate by using the retraction operator, which yields the update equations given in the main paper.
\section{Details on the Synthetic Dataset}
We now give further details on the synthetic data used in the paper. We synthesize our data by displacing a $4\times 4$ 2D grid to simulate 5 different images and scrambling the order of correspondence. Shown in Fig.~\ref{fig:synth}{a}, the ground truth corresponds to an identity ordering where the $i^{th}$ element of each grid is mapped to $i^{th}$ node of the other. Note that the graph is fully connected but we show only consecutive connections. To simulate noisy data, we introduce $16$ random swaps into the ground truth as shown in Fig.~\ref{fig:synth}{b}. We also randomize the initialization in a similar way. On this data we first run a baseline method where instead of restricting ourselves to the Birkhoff Polytope, we use the paths of the orthogonal group $O(N)$. Our second baseline is the prosperous method of Maset~\etal~\cite{maset2017}. Note that, regardless of the initialization (Fig~\ref{fig:synth}{e},{f}) our method is capable of arriving at visually more satisfactory local minimum. This validates that for complex problems such as the one at hand, respecting the geometry of the constraints is crucial. Our algorithm is successful at that and hence is able to find high quality solutions. In the figure the more \textit{parallel} the lines are, the better, depicting closeness to the ground truth.
\insertimageStar{1}{synth.pdf}{Results from running our method on the synthetic dataset as well as some baseline methods such as Maset~\etal~\cite{maset2017} and minimizing our energy on the orthogonal group. Note that while using the same energy function, considering the Birkhoff Polytope (ours) rather than the orthogonal group leads to much more visually appealing results with higher recall.\vspace{-5mm}}{fig:synth}{hbtp}

\section{Examples on the Willow  Dataset}
\paragraph{Finding the Optimum Solution}
We now show matches visualized after running our synchronization on the Willow dataset~\cite{cho2013learning}. For the sake of space, we have omitted some of those results from the paper. Fig~\ref{fig:realsup} plots our findings, where our algorithm is able to converge from challenging initial matches. Note that these results only show our the MAP estimates explained in the main paper. It is possible to extend our approach with some form of geometric constraints as in Wang~\etal~\cite{wang2018multi}. We leave this as a future work.

\paragraph{Uncertainty Estimation}
There are not many well accepted standards in evaluating the uncertainty. Hence in the main paper, we resorted what we refer as the \textit{top-K} error. We will now briefly supply more details on this. The purpose of the experiment is to measure the quality of the sample proposals generated by our Birkhoff-RLMC. To this end, we introduce a random sampler that generates, $K$ arbitrary solutions that are highly likely to be irrelevant (bad proposals). These solutions alone do not solve the problem. However if we were to consider the result correct whenever either the found optimum or the random proposal contains the correct match i.e. append the additional $K-1$ samples to the solution set, then, even with a random sampler we are guaranteed to increase the recall. Similarly, samples from Birkhoff-RLMC will also yield higher recall. The question then is: Can Birkhoff-RLMC do better than a random sampler? To give an answer, we basically record the relative improvement in recall both for the random sampler and Birkhoff-RLMC. The \textit{top-K error} for different choices of $K$ is what we presented in Table 2 of the main paper. We further visualize these solutions in the columns (c) to (f) of the same table. To do so, we simply retain the assignments with the $K$-highest scores in the solutions $\X$. Note that the entire $\X$ acts as a confidence map itself (Column d). We then use the MATLAB command \textit{imagesc} on the ${\X}$ with the \textit{jet} colormap.

Next, we provide insights into how the sampler works in practice.
Fig.~\ref{fig:iterations} plots the objective value attained as the iterations of the Birkhoff-RLMC sampler proceeds. For different values of $\beta$ these plots look different. Here we use $\beta=0.08$ and show both on Motorbike and Winebottle samples (used above) and for 1000 iterations, the behaviour of sample selection. In the paper, we have accumulated these samples and estimated the confidence maps. Note that occasionally, the sampler can discover better solutions than the one being provided. This is due to two reasons: 1) rarely, we could jump over local minima, 2) the initial solution is a discrete one and it is often plausible to have a doubly stochastic (relaxed) solution matrices that have lower cost.


\insertimageStar{1}{supp-visuals_cropped.pdf}{Matching results on sample images from the \textit{Winebottle} (\textbf{top-2-rows}) and \textit{Motorbike} (\textbf{bottom-2-rows}) datasets in the form of correspondences. The first sub-row of each row shows the initial matches computed via running a pairwise Hungarian algorithm, whereas the second row (tagged \textit{Optimized}) shows our solution. Colored dots are the corresponding points. A line segment is shown in red if it is found to be wrong when checked against the ground truth. Likewise, a blue indicates a correct match. For both of the datasets we use the joint information by optimizing for the cycle consistency, whereas the pairwise solutions make no use of such multiview correspondences. Note that thanks to  the cycle consistency, once a match is correctly identified, its associated point has the tendency to be correctly matched across the entire dataset.\vspace{-5mm}}{fig:realsup}{hbtp}
\begin{figure}[hbtp]
\begin{center}
\subfigure[Sampling on the Winebottle]{
\includegraphics[width=0.39\columnwidth]{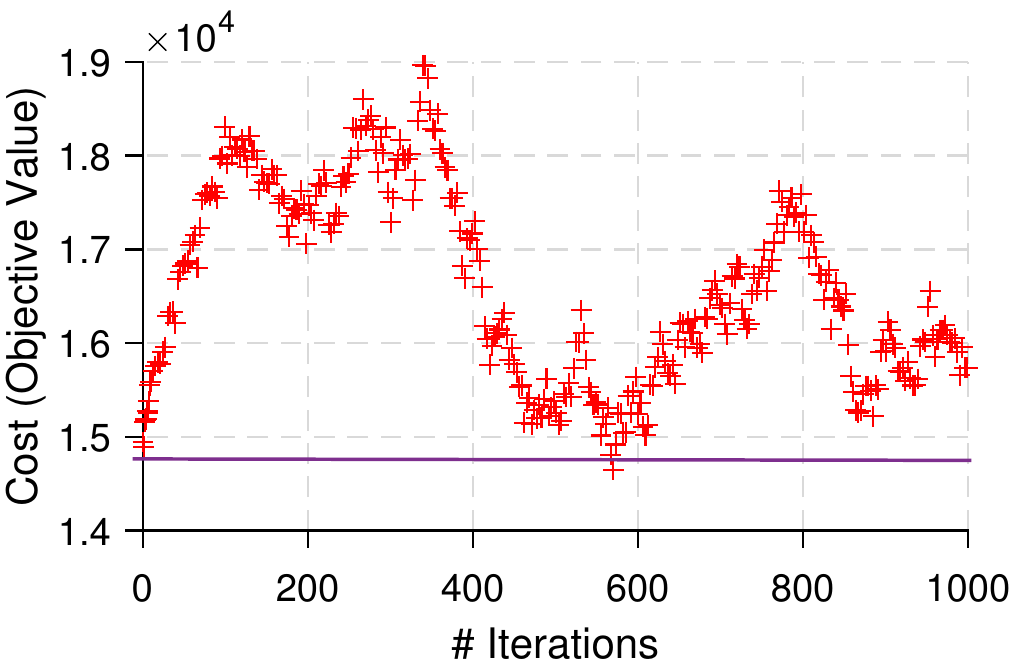}}\hspace{40pt}
\subfigure[Sampling on the Motorbike]{
\includegraphics[width=0.39\columnwidth]{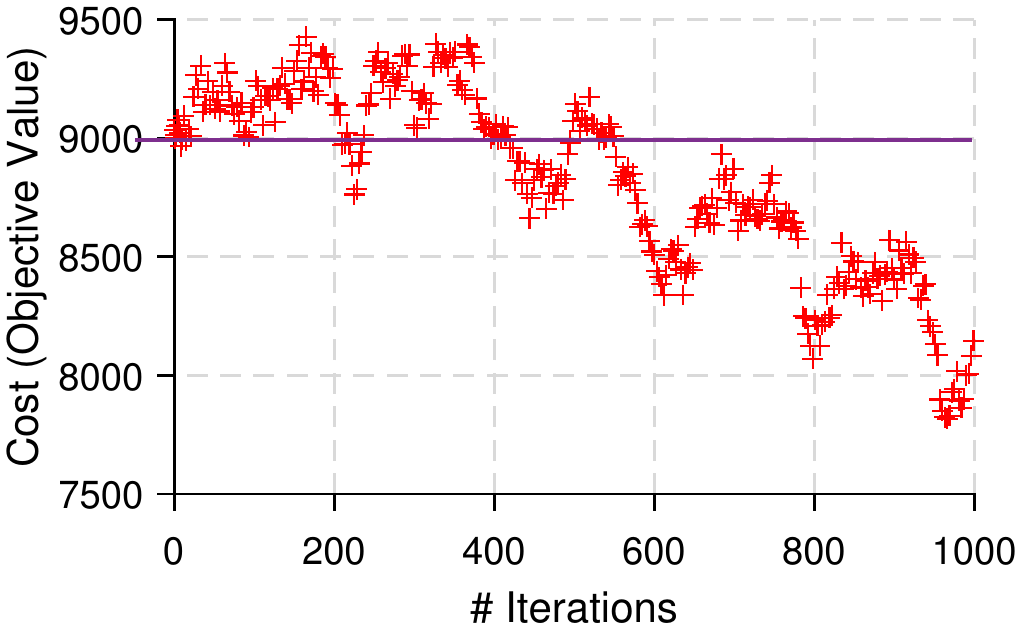}}
\end{center}
\caption{Iterates of our sampling algorithm. With $\beta=0.085$, our Birkhoff-RLMC sampler wanders around the local mode and draws samples on the Birkhoff Polytope. It can also happen that better solutions are found and returned. Purple line indicates the starting point, that is only a slight perturbation of the found solution.\vspace{-4mm}}
\label{fig:iterations}
\end{figure}

\section{Application to Establishing 3D Correspondences}
\label{sec:corresp3d}
We now apply our algorithm to solve correspondence problems on isolated 3D shapes provided by the synthetic Tosca dataset~\cite{bronstein2008numerical} that is composed of non-rigidly deforming 3D meshes of various objects such as cat, human, Centaur, dog, wolf and horse. The \textit{cat} object from this dataset along with ten of its selected ground truth correspondences is shown in Fig.~\ref{fig:tosca}. This is of great concern among the 3D vision community and the availability of the complete shapes plays well with the assumptions of our algorithm, i.e. permutations are \textit{total}. It is important to mention that when initial correspondences are good ($\sim 80\%$) all methods, including ours can obtain $100\%$ accuracy~\cite{huang2013consistent}. Therefore, to be able to put our method to test, we will intentionally degrade the initial correspondence estimation algorithm we use.


\vspace{3mm}\noindent\textbf{Initialization } Analogous to the 2D scenario, we obtain the initial correspondences by running a siamese Point-Net~\cite{qi2017pointnet} like network regressing the assignments between two shapes. Unlike 2D, we do not need to take care of occlusions, visibility or missing keypoint locations as 3D shapes can be full and clean as in this dataset.
On the average, we split half of the datasets for training and the other half for testing. Note that such amount of data is really insufficient to train this network, resulting in suboptimal predictions. We gradually downsample the point sets uniformly to sizes of $N_s=\{200, 50, 20\}$ to get the keypoints. At this point, it is possible to use sophisticated keypoint prediction strategies to establish the optimum cardinality and the set of points under correspondence. Note that it is sufficient to compute the keypoints per shape as we do not assume the presence of a particular order. Each keypoint is matched to another by the network prediction. The final stage of the prediction results in a soft assignment matrix on which we apply Hungarian algorithm to give rise to initial matches. 
\insertimageStar{1}{cats_small_cropped.pdf}{Visualizations of the \textit{cat} object from the Tosca dataset with the ground truth correspondences depicted.}{fig:tosca}{hbtp}

\noindent\textbf{Preliminary Results } We run, on these initial sets of correspondences, our algorithm and the state of the art methods as described in the main paper. We count and report the percentage of correctly matched correspondences (recall) in Tab.~\ref{tab:tosca} for only two objects, \textit{Cat} and \textit{Michael}. Running on the full dataset is a future work. It is seen that our algorithm consistently outperforms the competing approaches. The difference is less visible when the number of samples start dominating the number of edges. This is because none of the algorithms are able to find enough consistency votes to correct for the errors. 



\begin{table}[htbp]
  \centering
  \caption{Correspondence refinement on 3D inputs. We show our results on the meshes of Tosca~\cite{bronstein2008numerical} dataset. The cells show the percentage of correctly detected matches (\textit{recall)}.\vspace{1mm}}
  \resizebox{\columnwidth}{!} {
    \begin{tabular}{lcccc|cccc|cccc}
    \toprule\toprule
          & \multicolumn{4}{c|}{N=200}    & \multicolumn{4}{c|}{N=50}     & \multicolumn{4}{c}{N=20} \\
\cmidrule{2-13}          & Initial & MatchEIG & Wang et al. & Ours  & Initial & MatchEIG & Wang et al. & Ours  & Initial & MatchEIG & Wang et al. & Ours \\
\cmidrule{2-13}    Cat   & 38.42\% & 32.92\% & 37.83\% & \textbf{39.08\%} & 53.38\% & 55.42\% & 56.13\% & \textbf{56.93\%} & 35.56\% & 37.33\% & 38.33\% & \textbf{40.33\%} \\
    Michael & 30.64\% & 34.39\% & 35.92\% & \textbf{36.12\%} & 46.40\% & 52.23\% & \textbf{54.93\%} & 54.62\% & 45.76\% & 51.05\% & 51.63\% & \textbf{55.95\%} \\
    \end{tabular}%
    }
  \label{tab:tosca}%
\end{table}%

\end{document}